%% file: main.tex
\documentclass[runningheads]{llncs}

\usepackage{eccv}

\usepackage{eccvabbrv}

\usepackage{graphicx}
\usepackage{adjustbox}
\usepackage{booktabs}
\usepackage[accsupp]{axessibility}
\usepackage{microtype}
\usepackage{longtable}
\usepackage[section]{placeins}
\usepackage{algorithm}
\usepackage{algpseudocode}
\usepackage{tikz}
\usetikzlibrary{positioning,arrows.meta}

\newcommand{\citep}[1]{\cite{#1}}

\usepackage{hyperref}
\hypersetup{hypertexnames=false}

\newcommand{\cp}{CP\xspace}

\begin{document}

\title{SPARC: Single-Pass Scaling for Motion Forecasting with Conformal Bayesian Last Layers}
\titlerunning{Single-Pass Scaling for Motion Forecasting}

\author{%
Sakif Hossain\orcidID{0000-0003-0800-8392}\inst{1}\thanks{Equal contribution. Corresponding author.} \and
Julian Teusch\orcidID{0000-0002-4103-8430}\inst{1}\thanks{Equal contribution.} \and
J{\"o}rg P.\ M{\"u}ller\orcidID{0000-0001-7533-3852}\inst{1}%
}
\authorrunning{S.~Hossain et al.}

\institute{Clausthal University of Technology (TU Clausthal)\\
\email{sakif.hossain@tu-clausthal.de, julian.teusch@tu-clausthal.de}\\
\email{joerg.mueller@tu-clausthal.de}}

\maketitle

\input{paper_main.tex}

\bibliographystyle{splncs04}
\bibliography{conjbayes_hybrid_refs}

\end{document}

% --- supplement: supplementary.tex ---

\title{Supplementary Material for: SPARC: Single-Pass Scaling for Motion Forecasting with Conformal Bayesian Last Layers}
\titlerunning{Supplementary Material}

\author{%
Sakif Hossain\orcidID{0000-0003-0800-8392}\inst{1}\thanks{Equal contribution. Corresponding author.} \and
Julian Teusch\orcidID{0000-0002-4103-8430}\inst{1}\thanks{Equal contribution.} \and
J{\"o}rg P.\ M{\"u}ller\orcidID{0000-0001-7533-3852}\inst{1}%
}
\authorrunning{S.~Hossain et al.}

\institute{Clausthal University of Technology (TU Clausthal)\\
\email{sakif.hossain@tu-clausthal.de, julian.teusch@tu-clausthal.de}\\
\email{joerg.mueller@tu-clausthal.de}}

\maketitle

\input{paper_supp.tex}

\bibliographystyle{splncs04}
\bibliography{conjbayes_hybrid_refs}

%% file: paper_main.tex
\begin{abstract}
			Human motion forecasters are increasingly accurate and fast, but reliable deployment requires uncertainty estimates that are structured, calibrated, and efficient.
			Bayesian and ensemble-based uncertainty estimates often require repeated stochastic inference~\citep{gal2016dropout,lakshminarayanan2017ensembles}, while conformal calibration alone does not provide an epistemic signal or preserve trajectory covariance structure~\citep{vovk2005conformal,fong2021conformalbayesian}.
			We introduce \emph{SPARC} (Single-Pass Adaptive Risk Calibration), a Bayesian--conformal uncertainty layer for motion forecasting.
			A deterministic MLP backbone predicts the future mean, and a conjugate Bayesian last layer converts time-domain feature leverage into an analytic horizon-wise epistemic scale $\kappa_t(x)$.
			This scale inflates a graph-temporal Gaussian covariance without changing its correlation structure, and split conformal calibration produces 95\% marginal prediction tubes with finite-sample validity under exchangeability.
			The key interface is the structured factorization $\kappa_t(x)\Sigma_{\mathrm{str},t}(x)$, which injects feature-space epistemic uncertainty into trajectory densities without Monte Carlo sampling.
			Across nine dataset/protocol blocks and deterministic, multimodal, and calibration baselines, \emph{SPARC} ranks first on NLL and on the combined MPJPE+NLL criterion while retaining competitive point accuracy and efficient calibrated tubes.
			Ranking windows by $\kappa$ separates high-error cases, making the scale usable as a lightweight risk monitor.
			\keywords{Human motion forecasting \and uncertainty quantification \and Bayesian last layer \and conformal prediction}
			\end{abstract}

\section{Introduction}
Human motion forecasting---predicting future 3D pose trajectories from a short observed prefix---is a core primitive for human--robot interaction, animation, and anticipatory perception.
Recent forecasters achieve low mean per-joint position error (MPJPE) with fast inference~\citep{martinez2017rnn,guo2023simlpe}, but point forecasts alone are insufficient for downstream decision-making.
Reliable systems must reason about \emph{uncertainty}: multiple futures may be plausible due to intrinsic ambiguity (aleatoric), and models may become unreliable when extrapolating beyond their training support (epistemic), especially under distribution shift~\citep{kendall2017uncertainty,ovadia2019trust}.

Bayesian deep learning models epistemic uncertainty through a posterior over model parameters, yet practical approaches often rely on approximate inference and repeated stochastic forward passes (e.g., Monte Carlo (MC) dropout~\citep{gal2016dropout} or ensembles~\citep{lakshminarayanan2017ensembles}) and can degrade under shift~\citep{ovadia2019trust}.
Moreover, modern neural networks are frequently miscalibrated even in independent and identically distributed (i.i.d.) settings, motivating calibration layers with explicit guarantees~\citep{guo2017calibration}.
		Neural-linear and Bayesian last-layer methods reduce this cost by keeping a deterministic feature extractor and placing a conjugate Bayesian model on the final linear layer, yielding closed-form posterior predictive uncertainty at near single-pass cost~\citep{mackay1992bayesian,riquelme2018deepbayesianbandits,thakur2020luna,watson2021ldbayesll,daxberger2021laplaceredux}.
			However, neural-linear uncertainty is typically paired with factorized (diagonal or isotropic) observation models~\citep{mackay1992bayesian,daxberger2021laplaceredux,kim2025claps}, making it nontrivial to preserve structured spatio-temporal correlations for trajectory outputs.

			In parallel, conformal prediction provides distribution-free predictive inference with finite-sample marginal validity under exchangeability~\citep{vovk2005conformal,lei2018distributionfree,angelopoulos2021gentle}.
			Conformal methods have also been extended to functional and trajectory-like outputs, yielding finite-sample valid prediction bands around a base forecaster~\citep{lei2013conformalfunctional,diquigiovanni2021conformalbandsmfd}.
			For regression, conformalization can wrap any base predictor (including heteroscedastic quantile regressors~\citep{romano2019cqr}) to guarantee target coverage, but it does not by itself separate epistemic from aleatoric uncertainty, and achieving strong conditional guarantees remains challenging in general~\citep{gibbs2023conditionalconformal}.

				We introduce \emph{SPARC}, a Bayesian--conformal interface that adds structured calibrated uncertainty to a fast deterministic motion forecaster in a single pass.
				For deployment, uncertainty must attach to future joint trajectories with temporal and skeletal dependencies, not only to a scalar confidence score.
				Starting from a strong deterministic MLP backbone, SPARC places a conjugate Bayesian model on the final linear layer and derives an analytic, input-dependent \emph{epistemic scale} $\kappa(x)$ (or timewise $\kappa_t(x)$) with a leverage-style interpretation.
				This scale multiplicatively inflates a structured Gaussian aleatoric covariance that models spatio-temporal and skeletal correlations, and split conformal calibration converts the resulting distribution into 95\% marginal prediction tubes under exchangeability~\citep{lei2013conformalfunctional,diquigiovanni2021conformalbandsmfd}, complementing Bayesian credible regions under model misspecification~\citep{fong2021conformalbayesian,zhang2024posteriorcp}.
				The same $\kappa$ signal also enables $\kappa$-conditioned (Mondrian) tube scaling and optional trajectory-level conformal ellipsoid scores (Sec.~\ref{sec:split_conformal}).

\paragraph{Contributions.}
				We contribute:
				\begin{itemize}
				  \setlength{\itemsep}{0.15em}
				  \setlength{\topsep}{0.15em}
				  \setlength{\parskip}{0pt}
				  \item \textbf{Structured Bayesian last-layer scaling.} Under a multi-output conjugate model, the predictive covariance factorizes exactly as $\kappa_t(x)\,\Sigma_{\mathrm{str},t}(x)$ (Prop.~\ref{prop:structured_scaling}), enabling analytic epistemic inflation while preserving arbitrary positive-definite trajectory covariance structure.
				  \item \textbf{Single-pass Bayesian--conformal motion uncertainty.} We combine this scale with a graph-temporal Gaussian covariance head and split conformal calibration, producing calibrated marginal prediction tubes without Monte Carlo sampling or ensembles.
				  \item \textbf{Cross-dataset probabilistic gains.} Across nine dataset/protocol blocks, \emph{SPARC} ranks first on average NLL and on overall MPJPE+NLL, with efficient calibrated tubes and competitive point accuracy.
				  \item \textbf{Deployment-oriented epistemic monitoring.} On Human3.6M, ranking windows by $\kappa$ separates high-error cases: the highest-$\kappa$ decile has $1.79{\times}$ MPJPE compared to the average, and filtering that decile reduces MPJPE by $\approx 9\%$ (supplementary Sec.~\emph{Epistemic scale diagnostics}; thresholds tuned on held-out data, with no formal guarantee implied).
					\end{itemize}

\section{Related Work}
Human motion forecasting has long been dominated by deterministic point predictors, ranging from early recurrent/seq2seq models~\citep{martinez2017rnn} to modern feed-forward backbones that offer strong runtime--accuracy trade-offs~\citep{guo2023simlpe}.
Recent deterministic improvements add structure via graph convolutions~\citep{dang2022msr}, auxiliary supervision~\citep{xu2023auxiliary_tasks}, stability-inducing integration~\citep{chen2024symplectic}, or deviation-feedback refinement~\citep{sun2023defeenet}.
To represent the inherent ambiguity of future motion, multimodal predictors pursue latent-variable and generative formulations, including normalizing flows~\citep{yuan2020dlow}, diverse candidate generation~\citep{mao2021gsps}, and diffusion-style denoising models~\citep{chen2023humanmac}.
While these approaches can capture multiple plausible futures, their inference often involves sampling or multiple hypotheses, which motivates lightweight uncertainty layers that preserve real-time point-forecasting backbones~\citep{yuan2020dlow,mao2021gsps,chen2023humanmac}.

Uncertainty quantification for deep regression typically distinguishes aleatoric ambiguity from epistemic uncertainty due to limited training support~\citep{kendall2017uncertainty}.
Common epistemic estimators such as MC dropout and deep ensembles~\citep{gal2016dropout,lakshminarayanan2017ensembles} can improve robustness but require repeated stochastic forward passes and can still fail under distribution shift~\citep{ovadia2019trust}.
Neural-linear and last-layer Bayesian approximations keep the feature extractor deterministic and place Bayesian inference on the final linear layer, yielding analytic predictive uncertainty at near single-pass cost~\citep{mackay1992bayesian,riquelme2018deepbayesianbandits,thakur2020luna,watson2021ldbayesll,daxberger2021laplaceredux}.
However, these epistemic estimators are frequently used with factorized (diagonal or isotropic) likelihoods in regression~\citep{mackay1992bayesian,daxberger2021laplaceredux,kim2025claps}, which does not capture spatio-temporal or skeletal correlations needed for coherent trajectory tubes and likelihood.

				Structured Gaussian models such as matrix-normal/Kronecker factorizations capture separable spatio-temporal correlations, while graph/Gaussian Markov random field (GMRF) constructions encode skeletal coupling via sparse precisions~\citep{dawid1981matrixvariate,besag1974spatial,rue2005gmrf}.
				Separately, conformal prediction offers distribution-free predictive inference with finite-sample \emph{marginal} validity under exchangeability~\citep{vovk2005conformal,lei2018distributionfree,angelopoulos2021gentle}.
				It has been extended to functional and trajectory bands~\citep{lei2013conformalfunctional,diquigiovanni2021conformalbandsmfd}; for regression, conformalized quantile regression is a strong calibration baseline~\citep{romano2019cqr}, while dependent time series require additional assumptions or online adaptation~\citep{xu2023spci}.
				Recent work adapts conformal calibration to structured prediction, scene graphs, and multi-dimensional time series, illustrating that for non-scalar outputs the task-specific score and output interface are central~\citep{zhang2025conformalstructured,nag2025scenegraphcp,lee2026flowbased}.
				Conformal methods have also been used for trajectory uncertainty under distribution shift and safety-aware planning~\citep{huang2024cuqds,lindemann2022safeplanning}; in 3D human motion forecasting, latent conformal prediction calibrates multidimensional trajectories~\citep{ma2026latentconformal}.
				Approximate Bayesian posteriors have likewise been paired with conformal layers, including aleatoric--epistemic scaling via last-layer Laplace for conformal regression~\citep{kim2025claps} and posterior conformal prediction~\citep{zhang2024posteriorcp}.
				SPARC jointly integrates analytic conjugate Bayesian $\kappa_t$, graph-temporal covariance, marginal conformal tubes, and optional trajectory-level conformal set scores in a single-pass motion forecasting pipeline.

\section{Method}
\label{sec:method}
SPARC turns a deterministic forecaster into calibrated marginal prediction tubes through a Bayesian last-layer scale and a post-hoc conformal layer.
Figure~\ref{fig:arch_hyb} shows the dependency structure.
We then specify the calibrated deployment recipe used in experiments.
\begin{figure}[!t]
  \centering
  \IfFileExists{tikz/arch_conjbayes_hybrid.tex}{%
    \resizebox{0.95\linewidth}{!}{\input{tikz/arch_conjbayes_hybrid.tex}}%
  }{%
    \fbox{\parbox{0.95\linewidth}{Missing figure: tikz/arch_conjbayes_hybrid.tex}}%
  }
  \caption{SPARC dependency view. (A) Core: features yield $\kappa_t=1+\phi_t^\top\Lambda_{n,t}^{-1}\phi_t$ and inflate the structured covariance. (B) Split CP: calibration scores produce per-joint quantiles $q_{t,j}$ for marginal tubes.}
  \label{fig:arch_hyb}
\end{figure}
Algorithm~\ref{alg:conjgraph_cp} summarizes the recipe for adding this uncertainty layer to deterministic forecasters.
We additionally describe two conformal variants enabled by the analytic $\kappa$ and structured covariance: $\kappa$-conditioned (Mondrian) tubes and a trajectory-level conformal ellipsoid (Sec.~\ref{sec:split_conformal}).
Simpler variants in our ablation are recovered by dropping the graph coupling, the timewise shrinkage, or the hybrid inflation.

\paragraph{Method overview.}
SPARC combines a deterministic mean backbone $\hat\mu$ with a structured Gaussian head for aleatoric temporal/skeletal correlations $\Sigma_{\mathrm{str}}$.
A conjugate Bayesian last layer maps time-domain feature leverage to $\kappa_t(x)$, and split conformal calibration turns standardized residuals into marginal tubes; inference uses one forward pass plus closed-form quadratic forms.

\subsection{Hybrid structured Gaussian predictor (ConjGraph)}
\label{sec:conjgraph}
Let $x_{1:T}\in\mathbb{R}^{T\times C}$ be an observed pose prefix and $y_{1:H}\in\mathbb{R}^{H\times C}$ the future trajectory, with $C=3J$ coordinates (flattened joints), where $J$ is the number of joints.
As in our implementation, we train and evaluate on \emph{displacements} relative to the last observed pose $x_T$ (i.e., $y_t\leftarrow y_t-x_T$); absolute poses are recovered by adding $x_T$ back at inference.

\paragraph{Goal: calibrated marginal tubes.}
Our primary reported object is a collection of axis-aligned marginal intervals, written compactly as a prediction tube $\mathcal{C}_\alpha(x)$ with miscoverage $\alpha\in(0,1)$:
\begin{equation}
  \widehat{\mathcal{C}}_\alpha(x)
  \;=\;
  \prod_{t=1}^H\prod_{j=1}^J\prod_{d=1}^3
  \bigl[\hat\mu_{t,j,d}(x)\ \pm\ q_{t,j}\,\hat\sigma^{\mathrm{CP}}_{t,j,d}(x)\bigr],
  \label{eq:tube_def}
\end{equation}
where $(\hat\mu,\hat\sigma^{\mathrm{CP}})$ come from a probabilistic predictor and $\{q_{t,j}\}$ are split-conformal calibration factors (Sec.~\ref{sec:split_conformal}).
The product notation denotes the reported rectangle of marginal intervals; the guarantee below is marginal for the induced score distribution, while stricter trajectory-level set scores are considered separately (Sec.~\ref{sec:split_conformal}).

\paragraph{Backbone and time-domain design vector.}
We follow the siMLPe/DCT-bridge backbone~\citep{guo2023simlpe}: the input sequence is transformed along the time axis into a discrete cosine transform (DCT) coefficient representation and passed through a deterministic trunk $f_\theta$ that outputs coefficient-wise features $h_n(x)\in\mathbb{R}^{C}$ for $n\in\{1,\dots,N\}$.
Let $A\in\mathbb{R}^{N\times N}$ denote the inverse DCT (iDCT) matrix that maps coefficient outputs back to the time domain.
In the fully-connected output (FC-Out) parameterization used throughout this work, the time-domain mean at horizon step $t$ is linear in an augmented design vector $\phi_t(x)\in\mathbb{R}^{P}$:
\begin{align}
  \hat\mu_t(x) &= B_t^\top \phi_t(x), \label{eq:mu_time}\\
  \phi_t(x) &= \begin{bmatrix} g_t(x) \\ s_t \end{bmatrix}\in\mathbb{R}^{P},\qquad P=C+1,\nonumber\\
  g_t(x) &= \sum_{n=1}^N A_{t,n}\,h_n(x)\in\mathbb{R}^{C},\qquad
  s_t=\sum_{n=1}^N A_{t,n}\in\mathbb{R}. \label{eq:phi_def}
\end{align}
The scalar $s_t$ accounts for the coefficient-space bias after mapping back through the iDCT.

\paragraph{Conjugate Bayesian last-layer scale.}
We place a conjugate Bayesian model on the final linear map in~\eqref{eq:mu_time} while keeping the trunk deterministic.
For each horizon step $t$, consider the multi-output linear-Gaussian model
\begin{equation}
  y_t \;=\; B_t^\top \phi_t(x) + \varepsilon_t,
  \qquad
  \varepsilon_t \sim \mathcal{N}(0,\Sigma_{\mathrm{str},t}(x)),
  \label{eq:lin_gauss}
\end{equation}
	with a matrix-normal prior~\citep{dawid1981matrixvariate}
	\begin{equation}
	  B_t \mid \Sigma_{\mathrm{str},t}
	  \;\sim\;
	  \mathcal{MN}(B_{0,t},\Lambda_0^{-1},\Sigma_{\mathrm{str},t}),
	  \qquad
	  \Lambda_0=\lambda_0 I_P.
	  \label{eq:mn_prior}
	\end{equation}
	
		\begin{proposition}[Exact structured-covariance scaling]
		\label{prop:structured_scaling}
		Conditioning on any positive-definite $\Sigma_{\mathrm{str},t}(x)$, integrating out $B_t$ under the conjugate matrix-normal model in~\eqref{eq:lin_gauss}--\eqref{eq:mn_prior} preserves the structured covariance up to a \emph{scalar} multiplicative inflation:
	\[
	  y_t \mid x,\mathcal{D} \;\sim\; \mathcal{N}\!\bigl(\mu^{\mathrm{post}}_t(x),\ \kappa_t(x)\,\Sigma_{\mathrm{str},t}(x)\bigr),
	  \qquad
	  \kappa_t(x) = 1 + \phi_t(x)^\top \Lambda_{n,t}^{-1} \phi_t(x),
	\]
	where $\Lambda_{n,t}=\Lambda_0 + \sum_{i\in\mathcal{D}}\phi_t(x_i)\phi_t(x_i)^\top$.
	In particular, $\kappa_t$ rescales \emph{only} the covariance magnitude and preserves the correlation/structure encoded by $\Sigma_{\mathrm{str},t}$.
	\end{proposition}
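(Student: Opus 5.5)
The plan is the standard conjugate update for a matrix-normal prior with known row-covariance structure, followed by a direct computation of the predictive distribution. Throughout, I treat $\Sigma:=\Sigma_{\mathrm{str},t}$ as fixed and shared across the training pairs $(x_i,y_{i,t})$ at step $t$. This is the interpretation needed for conjugacy. If $\Sigma_{\mathrm{str},t}$ genuinely varies with $x_i$, the same argument goes through only after whitening, so I will state explicitly that the conditioning is on a common $\Sigma$. I use the convention $B\in\mathbb{R}^{P\times C}$, $\mathcal{MN}(M,U,V)$ meaning $\mathrm{vec}(B)\sim\mathcal{N}(\mathrm{vec}(M),V\otimes U)$, with $U=\Lambda_0^{-1}$ and $V=\Sigma$.

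\textbf{Step 1 (posterior).} Stack the data as $\Phi\in\mathbb{R}^{n\times P}$ and $Y\in\mathbb{R}^{n\times C}$, so that $Y=\Phi B+E$ with rows of $E$ i.i.d.\ $\mathcal{N}(0,\Sigma)$, i.e.\ $E\sim\mathcal{MN}(0,I_n,\Sigma)$. Write the log joint density as $-\tfrac12\mathrm{tr}\bigl(\Sigma^{-1}[(Y-\Phi B)^\top(Y-\Phi B)+(B-B_0)^\top\Lambda_0(B-B_0)]\bigr)$ and complete the square in $B$. This gives
\[
B\mid\mathcal{D},\Sigma\sim\mathcal{MN}(B_n,\Lambda_n^{-1},\Sigma),\qquad \Lambda_n=\Lambda_0+\Phi^\top\Phi,\qquad B_n=\Lambda_n^{-1}(\Lambda_0B_0+\Phi^\top Y).
\]
The key observation is that the column covariance stays exactly $\Sigma$, because $\Sigma^{-1}$ factors out of the whole trace.

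\textbf{Step 2 (predictive).} For a new input, $y_t=B^\top\phi+\varepsilon$ with $\phi=\phi_t(x)$, and $\varepsilon$ is independent of $B$ given $\mathcal{D}$. Use the vec identity $B^\top\phi=(I_C\otimes\phi^\top)\mathrm{vec}(B)$. Then $B^\top\phi$ is Gaussian with mean $B_n^\top\phi$ and covariance
\[
(I\otimes\phi^\top)(\Sigma\otimes\Lambda_n^{-1})(I\otimes\phi)=(\phi^\top\Lambda_n^{-1}\phi)\,\Sigma.
\]
Adding the independent noise covariance $\Sigma$ gives total covariance $(1+\phi^\top\Lambda_n^{-1}\phi)\Sigma=\kappa_t(x)\Sigma$ and mean $\mu_t^{\mathrm{post}}=B_n^\top\phi$.

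\textbf{Step 3 (interpretation).} Since $\Lambda_n\succeq\lambda_0I\succ0$, we have $\kappa_t\ge1$ and it is a scalar. The correlation matrix $D^{-1/2}\kappa\Sigma D^{-1/2}$, with $D=\mathrm{diag}(\kappa\Sigma)$, is therefore invariant in $\kappa$, which shows the structure is preserved.

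The main obstacle is the bookkeeping in Step 1. The Kronecker ordering and the matrix-normal convention (which factor is the row covariance and which the column covariance) must be set up so that the prior's $\Sigma$ and the likelihood's $\Sigma$ coincide. That coincidence is what makes the posterior keep the separable form; with a prior that did not share $\Sigma$, the predictive covariance would not be a scalar multiple. I would fix the convention once, verify it on the prior $\mathcal{MN}(B_0,\Lambda_0^{-1},\Sigma)$, and then let Steps 2 and 3 follow mechanically.
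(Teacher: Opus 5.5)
Your proof is correct and follows the paper's argument. The steps match: the conjugate matrix-normal posterior $\mathcal{MN}(B_{n,t},\Lambda_{n,t}^{-1},\Sigma_{\mathrm{str},t})$, then the vec/Kronecker identity giving covariance $(\phi_t^\top\Lambda_{n,t}^{-1}\phi_t)\,\Sigma_{\mathrm{str},t}$ for $B_t^\top\phi_t$, then the independent noise term supplying the extra $1$; the only difference is that you derive the posterior by completing the square where the paper just writes ``by conjugacy.''

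Your explicit requirement that a single $\Sigma$ be shared by all training pairs is exactly the assumption the paper leaves implicit, since its $\Lambda_{n,t}$ carries no $\Sigma$-weighting. However, your aside that input-dependent $\Sigma(x_i)$ can be handled ``after whitening'' is not right in general. The posterior precision $\Sigma^{-1}\otimes\Lambda_0+\sum_i\Sigma(x_i)^{-1}\otimes\phi_t(x_i)\phi_t(x_i)^\top$ is then no longer a Kronecker product, so the scalar-inflation form is lost; it survives only for scalar rescalings $\Sigma(x_i)=c_i\Sigma$, and then with a reweighted $\Lambda_{n,t}$.
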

	The intuition is that integrating over the last-layer weights adds uncertainty in the output direction induced by the feature leverage $\phi_t(x)^\top\Lambda_{n,t}^{-1}\phi_t(x)$.
	Because the matrix-normal prior shares the same output covariance as the structured residual model, this additional uncertainty has the same correlation structure as $\Sigma_{\mathrm{str},t}$.
	Thus epistemic uncertainty can be injected by the scalar multiplier $\kappa_t(x)$ instead of replacing the structured covariance with a diagonal approximation.
	
	\begin{proof}
	By conjugacy, $B_t \mid \Sigma_{\mathrm{str},t},\mathcal{D}\sim\mathcal{MN}(B_{n,t},\Lambda_{n,t}^{-1},\Sigma_{\mathrm{str},t})$, hence $\mathrm{vec}(B_t)\mid \Sigma_{\mathrm{str},t},\mathcal{D}$ is Gaussian with covariance $\Sigma_{\mathrm{str},t}\otimes \Lambda_{n,t}^{-1}$.
	Therefore $B_t^\top \phi_t(x)$ has covariance $(\phi_t(x)^\top \Lambda_{n,t}^{-1}\phi_t(x))\,\Sigma_{\mathrm{str},t}(x)$, and adding $\varepsilon_t\sim\mathcal N(0,\Sigma_{\mathrm{str},t}(x))$ yields the stated inflation.
	\end{proof}
	
		\begin{remark}[Arbitrary structured covariance]
		The factorization in Prop.~\ref{prop:structured_scaling} holds for any positive-definite $\Sigma_{\mathrm{str},t}(x)$ (dense, Kronecker-factored, or defined via a sparse precision), enabling epistemic inflation without simplifying the structured aleatoric head.
			\end{remark}
			We use $\kappa_t(x)$ as an analytic epistemic inflation factor (no MC sampling) that increases when the test feature vector $\phi_t(x)$ has large leverage under the training design.
			This is a feature-space signal: its informativeness depends on the learned representation $\phi_t$ and should not be interpreted as a universal out-of-distribution detector independent of the backbone.
			In SPARC we keep the backbone mean predictor $\hat\mu_t(x)$ and use only $\kappa_t(x)$ (and its stabilized version $\tilde\kappa_t(x)$ below) to rescale the structured covariance head.

	\paragraph{Closed-form fitting and stabilization.}
	We only require $\Lambda_{n,t}^{-1}$ to compute $\kappa_t$.
	With $\Lambda_0=\lambda_0 I_P$, we fit $\Lambda_{n,t}^{-1}$ in closed form by accumulating the sufficient statistic $S_{xx,t}=\sum_i \phi_t(x_i)\phi_t(x_i)^\top$ on the training set and computing $(S_{xx,t}+\lambda_0 I_P)^{-1}$ once via a Cholesky factorization.
	At inference, we evaluate the quadratic form $\kappa_t(x)=1+\phi_t(x)^\top\Lambda_{n,t}^{-1}\phi_t(x)$ directly, avoiding MC sampling.
	To stabilize horizon-wise scaling, we optionally shrink $\kappa_t$ toward a time-homogeneous sequence-level scale derived from pooled statistics.
	Let $\Lambda_{\mathrm{glob}}=\lambda_0 I_P+\sum_{t,i}\phi_t(x_i)\phi_t(x_i)^\top$ and define
\begin{equation}
  \bar\kappa(x)\;=\;\frac{1}{H}\sum_{t=1}^H \Bigl(1+\phi_t(x)^\top \Lambda_{\mathrm{glob}}^{-1}\phi_t(x)\Bigr).
  \label{eq:kappa_bar}
\end{equation}
Our stabilized scale is
\begin{equation}
  \tilde\kappa_t(x)\;=\;(1-\rho)\,\kappa_t(x)+\rho\,\bar\kappa(x),\qquad \rho\in[0,1].
  \label{eq:kappa_shrink}
\end{equation}
The global-$\kappa$ variant is recovered by setting $\rho=1$.
In experiments we also allow a monotone ``epistemic temperature'' transform $\tilde\kappa_t\leftarrow \tilde\kappa_t^{\gamma}$ with $\gamma>0$ (with $\gamma=1$ corresponding to the default construction) to tune the NLL/interval-width (mean marginal interval width; reported as W95) trade-off.

\paragraph{Per-joint leverage scale for \cp tubes.}
The global timewise scale $\tilde\kappa_t(x)$ is shared across all joints at horizon $t$.
To allow per-joint epistemic inflation in the \cp layer without changing the likelihood, we compute an additional per-joint scale $\kappa_j(x)$ from reduced per-joint design vectors.
Let $\phi_{t,j}(x)\in\mathbb{R}^{4}$ contain the 3 coordinates of joint $j$ in $g_t(x)$ plus the shared bias feature $s_t$.
We fit a per-joint conjugate statistic $\Lambda^{-1}_{n,j}=(\lambda_{0,\mathrm{joint}}I+\sum_{i,t}\phi_{t,j}(x_i)\phi_{t,j}(x_i)^\top)^{-1}$ on $\mathcal D_{\mathrm{train}}$ and define $\kappa_{t,j}(x)=1+\phi_{t,j}(x)^\top\Lambda^{-1}_{n,j}\phi_{t,j}(x)$, aggregated as $\kappa_j(x)=\frac1H\sum_{t=1}^H \kappa_{t,j}(x)$.
For conformal tubes we use
\begin{equation}
  \hat\sigma^{\mathrm{CP}}_{t,j,d}(x)
  \;=\;
  \sqrt{\kappa_j(x)}\,\hat\sigma_{t,j,d}(x),
  \label{eq:sigma_cp_jointkappa}
\end{equation}
while the probabilistic core (and NLL) remains unchanged.

\paragraph{Structured Gaussian aleatoric covariance (MN-GraphJ).}
We refer to this matrix-normal + joint-graph construction as \emph{MN-GraphJ} (MatrixNormal-GraphJ).
The backbone predicts a structured \emph{aleatoric} covariance for the residual matrix
$R(x)=y_{1:H}-\hat\mu_{1:H}(x)\in\mathbb{R}^{H\times C}$.
We use a separable matrix-normal parameterization,
\begin{equation}
  \begin{aligned}
    \mathrm{vec}(R(x)) &\sim \mathcal{N}\!\bigl(0,\ \Sigma_T(x)\otimes\Sigma_C(x)\bigr), \\
    R(x) &\sim \mathcal{MN}(0,\Sigma_T(x),\Sigma_C(x)),
  \end{aligned}
  \label{eq:mn_ale}
\end{equation}
where $\Sigma_T(x)\in\mathbb{R}^{H\times H}$ captures temporal correlations and $\Sigma_C(x)\in\mathbb{R}^{C\times C}$ captures spatial/skeletal coupling.
We parameterize $\Sigma_T(x)=L_T(x)L_T(x)^\top$ with a learned lower-triangular $L_T(x)$ (packed Cholesky representation), enabling dense temporal correlations with stable optimization.
To encode skeletal structure, we model $\Sigma_C$ via a joint-graph precision (a GMRF construction~\citep{besag1974spatial,rue2005gmrf}).
Let $L_{\mathrm{joint}}\in\mathbb{R}^{J\times J}$ be the unweighted graph Laplacian of a chosen joint graph (e.g., the kinematic tree, or a data-driven $k$-nearest-neighbor (kNN) graph estimated from training motions).
The network predicts positive scalars $\tau(x)$ and $\epsilon(x)$, and we define
\begin{equation}
  Q_J(x) \;=\; \tau(x)\,L_{\mathrm{joint}} + \epsilon(x)\,I_J,
  \qquad
  \Sigma_C(x) \;=\; (Q_J(x)\otimes I_3)^{-1}.
  \label{eq:graph_cov}
\end{equation}

\paragraph{Hybridization.}
We combine epistemic and aleatoric components by inflating the temporal covariance factor with $\tilde\kappa_t(x)$.
Define
\begin{equation}
  D_\kappa(x)=\operatorname{diag}\!\bigl(\sqrt{\tilde\kappa_t(x)}\bigr)_{t=1}^H,
  \label{eq:d_kappa}
\end{equation}
and set
\begin{equation}
  \Sigma_T^{\mathrm{hyb}}(x)
  \;=\;
  D_\kappa(x)\,\Sigma_T(x)\,D_\kappa(x)^\top,
  \qquad
  \Sigma^{\mathrm{hyb}}(x)=\Sigma_T^{\mathrm{hyb}}(x)\otimes \Sigma_C(x).
  \label{eq:hyb_cov}
\end{equation}
If $\Sigma_T(x)=L_T(x)L_T(x)^\top$, we implement~\eqref{eq:hyb_cov} by scaling the $t$-th row of $L_T(x)$ by $\sqrt{\tilde\kappa_t(x)}$, preserving correlation structure while inflating scale.
The resulting marginal variance factorizes as
\begin{equation}
  \hat\sigma^2_{t,c}(x)
  \;=\;
  \mathrm{Var}(y_{t,c}\mid x)
  \;=\;
  \tilde\kappa_t(x)\,[\Sigma_T(x)]_{tt}\,[\Sigma_C(x)]_{cc}.
	  \label{eq:marg_var}
\end{equation}

\paragraph{Scope of the exact result and stabilizers.}
Prop.~\ref{prop:structured_scaling} establishes the exact last-layer covariance factorization under the stated conjugate Gaussian model, conditional on the structured covariance.
The shrinkage coefficient $\rho$ and epistemic temperature $\gamma$ are held-out stabilizers for the NLL--width trade-off, not additional assumptions in the theorem.
Likewise, the per-joint scale in~\eqref{eq:sigma_cp_jointkappa} is used only for conformal tube scoring and construction; it does not change the Gaussian NLL core.
The split-conformal guarantee below therefore attaches to the final calibrated score under exchangeability, while the Bayesian factorization explains the probabilistic core that supplies the scale.

\subsection{Split conformal calibration for prediction tubes}
\label{sec:split_conformal}
Bayesian credible regions can be miscalibrated under model misspecification or dataset shift~\citep{fong2021conformalbayesian,zhang2024posteriorcp}.
We therefore conformalize the hybrid Gaussian predictor to obtain calibrated marginal tubes with finite-sample validity under exchangeability~\citep{vovk2005conformal,lei2018distributionfree,angelopoulos2021gentle}.

Following split conformal calibration~\citep{vovk2005conformal,lei2018distributionfree,angelopoulos2021gentle}, on a held-out calibration set $\mathcal{D}_{\mathrm{cal}}=\{(x_i,y_i)\}_{i=1}^{n}$, we compute standardized absolute residual scores
\begin{equation}
  s_{i,t,j,d} \;=\; \frac{|y_{i,t,j,d}-\hat\mu_{t,j,d}(x_i)|}{\hat\sigma^{\mathrm{CP}}_{t,j,d}(x_i)},
\end{equation}
where $\hat\sigma^{\mathrm{CP}}$ is the scale used for tube construction (in SPARC this includes the hybrid timewise inflation $\tilde\kappa_t$ and the \cp-only joint factor in~\eqref{eq:sigma_cp_jointkappa}).
For each horizon step $t$ and joint $j$, we set $q_{t,j}$ using the split-conformal ``higher'' quantile rule,
\begin{equation}
  q_{t,j} \;=\;
  \mathrm{Quantile}\!\left(\{s_{i,t,j,d}\}_{i,d},\ \left\lceil (N_{t,j}+1)(1-\alpha)\right\rceil/N_{t,j}\right),
\end{equation}
where $N_{t,j}$ is the number of pooled scores for $(t,j)$ (in our implementation $N_{t,j}=n\cdot 3$).
We then output the tube in~\eqref{eq:tube_def}.

\paragraph{Guarantee and practical notes.}
Split conformal provides finite-sample \emph{marginal} validity for the induced score distribution under exchangeability between calibration and evaluation windows.
Calibration is post-hoc: it computes quantiles on $\mathcal D_{\mathrm{cal}}$ and does not update network weights, covariance parameters, or the conjugate statistics.
Under temporal dependence or distribution shift that violates exchangeability, the conformal layer should be read as empirical calibration rather than an unconditional coverage guarantee.
Pooling over the 3 coordinates per joint targets the mean marginal coverage metric we report; stricter max-score variants are more conservative.
\paragraph{Risk-adaptive and set-valued variants.}
The supplement reports two extensions: $\kappa$-conditioned (Mondrian) conformal tubes that calibrate bin-wise $q_{t,j,b}$~\citep{vovk2012conditional,bostrom2020mondrian}, and a trajectory-level conformal ellipsoid based on a structured Mahalanobis/whitened-residual score under $\Sigma_T^{\mathrm{hyb}}\otimes\Sigma_C$.

\begin{algorithm}[t]
\caption{SPARC: single-pass structured uncertainty with analytic $\kappa$ and split conformal calibration}
\label{alg:conjgraph_cp}
\begin{algorithmic}[1]
\Require Training data $\mathcal{D}_{\mathrm{train}}$, calibration data $\mathcal{D}_{\mathrm{cal}}$, miscoverage $\alpha$, prior precisions $\lambda_0$ and $\lambda_{0,\mathrm{joint}}$
\State Train a deterministic mean forecaster (mean fine-tune; MeanFT) to obtain time-domain features $\phi_t(x)$ and mean predictions $\hat\mu_t(x)$.
\State Fit a structured Gaussian covariance head (e.g., MN-GraphJ) on residuals with frozen mean, yielding $\Sigma_{\mathrm{str},t}(x)$.
\State Accumulate $S_{xx,t}\leftarrow \sum_{(x_i,\cdot)\in\mathcal{D}_{\mathrm{train}}}\phi_t(x_i)\phi_t(x_i)^\top$ and compute $\Lambda_{n,t}^{-1}=(\lambda_0 I + S_{xx,t})^{-1}$ (e.g., via Cholesky factorization).
\State Fit per-joint leverage statistics: accumulate $S_{xx,j}\leftarrow \sum_{(x_i,\cdot)\in\mathcal{D}_{\mathrm{train}},t}\phi_{t,j}(x_i)\phi_{t,j}(x_i)^\top$ and compute $\Lambda_{n,j}^{-1}=(\lambda_{0,\mathrm{joint}} I + S_{xx,j})^{-1}$. Use it to compute $\kappa_j(x)$ and $\hat\sigma^{\mathrm{CP}}$ via~\eqref{eq:sigma_cp_jointkappa}.
\State \textbf{Split conformal calibration:} on $\mathcal{D}_{\mathrm{cal}}$, compute
\Statex \hspace{\algorithmicindent}$s_{i,t,j,d} = |y_{i,t,j,d}-\hat\mu_{t,j,d}(x_i)|/\hat\sigma^{\mathrm{CP}}_{t,j,d}(x_i)$ and set
\Statex \hspace{\algorithmicindent}$q_{t,j} \leftarrow \mathrm{Quantile}(\{s_{i,t,j,d}\}_{i,d}, \lceil (N_{t,j}+1)(1-\alpha)\rceil/N_{t,j})$ (higher), where $N_{t,j}$ is the number of pooled scores for $(t,j)$.
\State \textbf{(Optional) $\kappa$-Mondrian \cp:} bin sequences by $r(x)$ (e.g., mean $\sqrt{\kappa_t(x)}$) and compute $q_{t,j,b}$ per bin $b$; at test time use $q_{t,j,b(x)}$.
\State \textbf{(Optional) trajectory \cp:} compute $s_i=\|{L_T^{\mathrm{hyb}}(x_i)}^{-1}(y_i-\hat\mu(x_i))L_C(x_i)^{-\top}\|_F$ and conformalize to get $q_{\mathrm{traj}}$, yielding an ellipsoidal set $\{y:\ s(y)\le q_{\mathrm{traj}}\}$.
\State \textbf{Prediction for new $x$:} compute $\kappa_t(x)=1+\phi_t(x)^\top\Lambda_{n,t}^{-1}\phi_t(x)$, optionally shrink to $\tilde\kappa_t(x)$ via~\eqref{eq:kappa_shrink}, and form $\Sigma_{\mathrm{hyb},t}(x)=\tilde\kappa_t(x)\Sigma_{\mathrm{str},t}(x)$.
\State Output calibrated tube $\widehat{\mathcal{C}}_\alpha(x)$ in~\eqref{eq:tube_def} (or $q_{t,j,b(x)}$), and optionally the trajectory ellipsoid.
\end{algorithmic}
\end{algorithm}

\section{Experiments}
\subsection{Experimental setup}
\paragraph{Datasets.}
We treat Human3.6M (H36M)~\citep{ionescu2014human36m} as the primary benchmark (indoor motion capture (MoCap)) and additionally evaluate cross-dataset generalization on AMASS~\citep{mahmood2019amass}, LaFAN1~\citep{harvey2020robust}, CMU-MoCap~\citep{cmu_mocap}, 3DPW~\citep{vonmarcard2018pw3d}, and three human--robot interaction (HRI)-style datasets with different capture / interaction setups: CHICO~\citep{sampieri2022chico}, HA4M~\citep{cicirelli2022ha4m}, and AnDy~\citep{maurice2018andy}.
Unless explicitly stated otherwise, non-H36M datasets use $T{=}50$ observed frames and we forecast $H{=}25$ future frames.

\paragraph{Evaluation protocol.}
We use a random-but-deterministic partition of the official evaluation windows.
We fix the random number generator (RNG) seed to 304 and permute the windows.
We use the first $n_{\mathrm{cal}}{=}512$ windows for split conformal calibration and report all metrics on the next $n_{\mathrm{eval}}{=}1024$ windows.
We use $\alpha{=}0.05$ throughout.
We keep $(n_{\mathrm{cal}},n_{\mathrm{eval}},\alpha)$ fixed across models.
Data usage is separated by operation: the training split fits the backbone, covariance heads, and conjugate sufficient statistics; held-out training windows select the five uncertainty/calibration hyperparameters: $\lambda_0$ (`lambda0'), $\rho$ (`time\_shrink\_rho'), $\gamma$ (`kappa\_power'), $\lambda_{0,\mathrm{joint}}$ (`lambda0\_joint'), and $\gamma_{\mathrm{joint}}$ (`joint\_kappa\_power').
The calibration subset computes only split-conformal quantiles $\{q_{t,j}\}$, and the disjoint evaluation subset is used only once for the reported metrics.
Thus the conformal step is post-hoc calibration rather than gradient training, and no model weights, covariance parameters, conjugate statistics, or hyperparameters are updated on the final evaluation windows.

\paragraph{Metrics.}
We report mean per-joint position error (MPJPE), final displacement error (FDE; here implemented as final-horizon MPJPE), and per-element Gaussian negative log-likelihood (NLL) (averaged over horizons and coordinates). For space, we present MPJPE/FDE as ``MPJPE / FDE'' in the main tables.
Because absolute metric scales and units vary across datasets, we additionally report per-dataset ranks in cross-dataset summaries; MPJPE+NLL rank denotes the mean of MPJPE and NLL ranks (Table~\ref{tab:overview_conjbayes_rank}).
For uncertainty, we evaluate nominal two-sided 95\% marginal intervals $\hat\mu\pm Z_{0.95}\hat\sigma$ (\textbf{Cov95/W95}) and split-conformalized intervals $\hat\mu\pm q_{t,j}\,\hat\sigma^{\mathrm{CP}}$ (\textbf{Cov95(\cp)/W95(\cp)}).
Here $Z_{0.95}=\Phi^{-1}(0.975)\approx 1.960$.
Cov95 is empirical marginal coverage (fraction of scalar targets $(t,c)$ covered), and W95 is the corresponding mean marginal interval width (averaged over $(t,c)$).
Conformal scales $\{q_{t,j}\}$ are calibrated \emph{per horizon and joint} by pooling standardized residuals $|y_{t,j,d}-\hat\mu_{t,j,d}|/\hat\sigma^{\mathrm{CP}}_{t,j,d}$ across the 3 coordinates $d$ of each joint on the calibration split.
Deterministic models are paired with a fixed isotropic observation noise $\sigma_{\mathrm{obs}}{=}0.017$\,m to compute NLL and nominal intervals; they do not output $\hat\sigma_{t,c}(x)$ and thus do not report conformal tube metrics.
The supplementary material additionally reports $\kappa$-binned Mondrian tubes and trajectory-level conformal ellipsoid metrics.

\paragraph{Training and calibration.}
All methods share the siMLPe/DCT backbone~\citep{guo2023simlpe} (MLP in DCT coefficient space, mapped to the time domain via iDCT; we forecast displacements relative to $x_T$ and add $x_T$ back at inference, as described in Sec.~\ref{sec:conjgraph}).
All methods are trained with Adam.
For the siMLPe backbone we use 2000 steps for base training and an 800-step mean fine-tune (MeanFT).
Probabilistic heads are trained for 800 steps with the mean frozen, using batch sizes 256/128/64 respectively.
For structured Gaussian heads, we initialize the mean from a trained MeanFT model and (when applicable) freeze the mean while optimizing only covariance parameters via NLL.
For \textbf{SPARC}, we fit the conjugate Bayesian last-layer statistics $\Lambda^{-1}_{n,t}$ in closed form on the training split, compute $\kappa_t(x)$, and tune the uncertainty/calibration hyperparameters per dataset on held-out training windows for the NLL--W95 trade-off.
These values are fixed for all reported evaluations.
Finally, we compute $\{q_{t,j}\}$ on the calibration split and evaluate calibrated tubes on the evaluation split.

\subsection{Baselines}
We compare against baseline families evaluated under the same protocol:
(i) deterministic point forecasters (siMLPe/MeanFT and recent variants such as AuxTasks~\citep{xu2023auxiliary_tasks}, Symplectic~\citep{chen2024symplectic}, DeFeeNet~\citep{sun2023defeenet}, and HumanMAC~\citep{chen2023humanmac}, plus graph-based SeSGCN~\citep{sampieri2022chico});
(ii) sampling/ensemble uncertainty baselines (deep ensembles~\citep{lakshminarayanan2017ensembles} and multi-head predictors such as GSPS~\citep{mao2021gsps});
(iii) calibration baselines (Bridge-CQR: conformalized quantile regression~\citep{romano2019cqr});
and (iv) Gaussian uncertainty heads (diagonal DCT-Bridge; separable matrix-normal~\citep{dawid1981matrixvariate}; and graph/GMRF-style structured covariances~\citep{besag1974spatial,rue2005gmrf});
and (v) external stochastic/diffusion forecasters (SkeletonDiffusion~\citep{curreli2025nonisotropic}, BeLFusion~\citep{barquero2023belfusion}, TransFusion~\citep{tian2024transfusion}, CoMusion~\citep{sun2024comusion}, SPARD~\citep{zhang2026spard}, MotionMap~\citep{hosseininejad2025motionmap}, and SLD-HMP~\citep{xu2024learning}).
When external methods provide public checkpoints, we initialize from the released weights and fine-tune under our protocol.
If checkpoints exist only for a subset of datasets, we initialize from the closest available checkpoint (e.g., Human3.6M/AMASS) and fine-tune.
Otherwise, we train the method from scratch.

\subsection{Cross-dataset comparison}
Because units differ across datasets, Table~\ref{tab:overview_conjbayes_rank} reports mean ranks ($\mathrm{MR}$; $\downarrow$ better) across the cross-dataset benchmark. Bold (underline) indicates best (second-best) among the shown methods (ties included); for coverage, best/second-best is defined by closeness to the 0.95 target.
Full per-dataset metrics are reported in the supplement.
\input{tables/overview_conjbayes_avg_rank.tex}
The rank table compares point accuracy (MPJPE/FDE), density quality (NLL), and calibrated tube efficiency (W95(\cp)) on a common scale.
SPARC has the leading density-quality and MPJPE+NLL ranks while retaining competitive point accuracy.
MeanFT remains a strong deterministic point predictor, but it does not provide calibrated uncertainty.
Sampling/generative baselines use multiple hypotheses or stochastic inference, while calibration-only baselines improve coverage without separating epistemic risk from structured aleatoric covariance.
SPARC combines competitive point accuracy ($\mathrm{MR}_{\mathrm{MPJPE}}=4.39$), rank-1 density quality ($\mathrm{MR}_{\mathrm{NLL}}=1.00$), rank-1 combined MPJPE+NLL trade-off ($2.69$), and efficient calibrated tubes ($\mathrm{MR}_{W95(\cp)}=2.50$) in one forward pass.
\paragraph{Practical benefits over existing methods.}
SPARC is not a uniform MPJPE winner, but it has the lowest NLL among shown methods on every dataset/protocol block while retaining competitive MPJPE; the supplementary detailed tables report the full metrics.
Its calibrated tubes remain efficient, so coverage is not obtained by indiscriminately widening intervals.
Beyond average metrics, the supplement shows that $\kappa$ stratifies error monotonically and supports selective deployment via abstention/fallback on high-$\kappa$ windows (thresholds tuned on held-out data, with no formal guarantee implied).
For robustness, the supplement re-evaluates fixed SPARC checkpoints over five split seeds, re-fits selected uncertainty checkpoints with three fit seeds, and repeats held-out Optuna-TPE HPO with three sampler seeds; coverage and tube width remain stable.

To isolate the contributions of structured covariance, epistemic scaling, and time-coupled $\kappa_t$ shrinkage, Fig.~\ref{fig:fc_out_ablation} summarizes an FC-Out component ablation across datasets; the supplement gives the per-dataset results.
Fig.~\ref{fig:fc_out_ablation} shows both the MPJPE/NLL component effects and a compact calibrated tube-efficiency rank; the supplementary tables give CP coverage/width for all FC-Out variants under the same fixed protocol.
\begin{figure}[htbp]
  \centering
  \begin{adjustbox}{max width=\linewidth}
    \input{figures/conjbayes_fc_out_ablation_tikz.tex}
  \end{adjustbox}
  \caption{FC-Out trade-off ablation. Left: active components. Center: within-family MPJPE+NLL-rank gain over base $\kappa(x)$; labels give absolute rank. Right: fixed-protocol $\mathrm{MR}_{W95(\mathrm{CP})}$ (lower is better). The center panel is within-family only and not comparable to Table~\ref{tab:overview_conjbayes_rank}.}
  \label{fig:fc_out_ablation}
\end{figure}

\subsection{FC-Out ablation: component contributions}
\paragraph{Ablation findings (all nine dataset/protocol blocks).}
Within this family, \emph{hybrid MN-GraphJ} gives the lowest MPJPE+NLL mean rank ($\mathrm{MR}_{\mathrm{MPJPE+NLL}}=2.47$), followed by uncoupled timewise $\kappa_t$ ($2.53$) and \emph{SPARC} (time-coupled $\kappa_t$; $3.17$) (Fig.~\ref{fig:fc_out_ablation}).
The difference reflects the metric being optimized.
Hybrid MN-GraphJ favors MPJPE+NLL, whereas the supplementary complete-CP comparison gives SPARC the lower tube-width rank ($\mathrm{MR}_{W95(\cp)}=2.56$ vs.\ $4.78$).
We therefore use \emph{SPARC} (time-coupled $\kappa_t$) as the default because it gives the best tube-efficiency trade-off for the calibrated deployment recipe, while the full benchmark against all baselines still ranks it first on global $\mathrm{MR}_{\mathrm{MPJPE+NLL}}$ (Table~\ref{tab:overview_conjbayes_rank}).
The main gains come from hybrid structured covariance and GraphJ coupling: relative to basic FC-Out $\kappa(x)$ (within-family $\mathrm{MR}_{\mathrm{MPJPE+NLL}}=5.28$), hybrid MN improves to $3.94$, and GraphJ improves further to $2.47$.
Time-coupled shrinkage should be read as a calibration-efficiency knob rather than a within-family NLL win.
Within this ablation it sacrifices some NLL rank ($\mathrm{MR}_{\mathrm{NLL}}=3.11$ vs.\ $2.44$ for hybrid MN-GraphJ without coupling) for the conformalized default used in the full benchmark (Table~\ref{tab:overview_conjbayes_rank}).
The DCT-pinv variant is omitted from Fig.~\ref{fig:fc_out_ablation} for space; the supplementary tables report it with overall rank $6.00$, so it is not part of the recommended recipe.

\paragraph{Compute.}
All structured heads (including SPARC) require one forward pass at inference.
SPARC adds only horizon-wise quadratic forms for $\kappa_t$ and does not require MC sampling.
	The closed-form fit of $\Lambda^{-1}_{n,t}$ is an offline pass over training features, and split conformal calibration uses only the $512$-window held-out calibration subset from the official evaluation protocol.
The supplement reports compute/memory overhead and latency measurements.

\paragraph{Controlled synthetic benchmark (supplementary).}
The controlled synthetic benchmark in the supplement tests component recovery under a known ground-truth uncertainty decomposition.

\subsection{Qualitative uncertainty visualization}

\begin{figure}[!htb]
  \centering
  \IfFileExists{figures/conjbayes_ci_example.pdf}{%
    \includegraphics[width=0.92\linewidth]{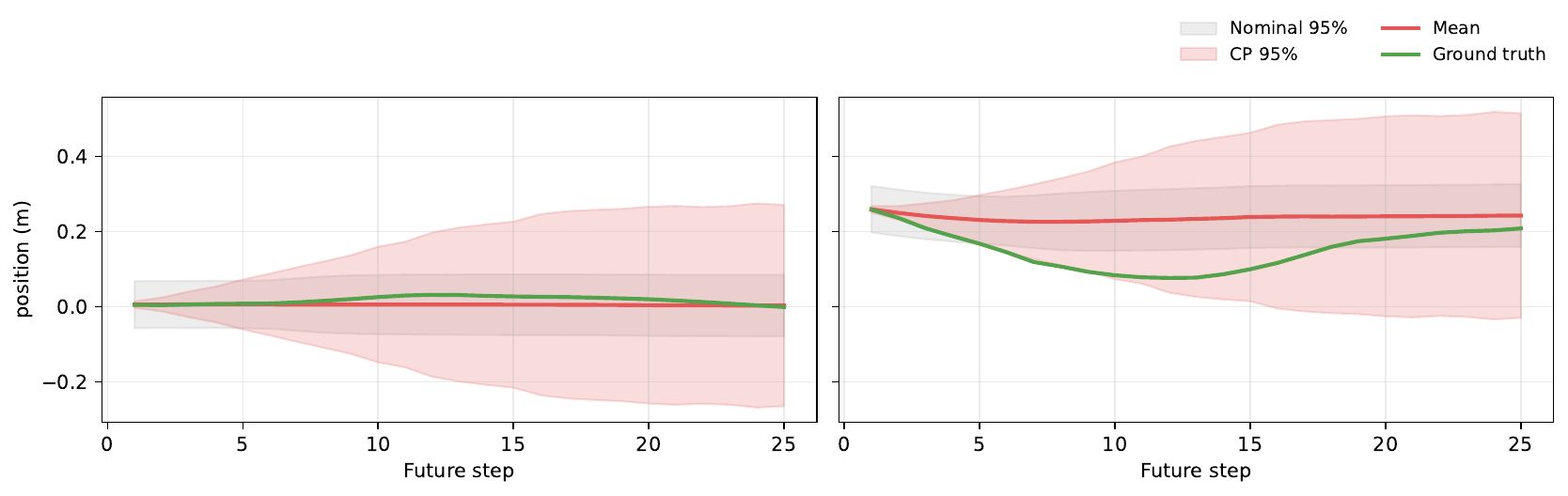}%
  }{%
    \fbox{\parbox{0.95\linewidth}{Missing figure: figures/conjbayes_ci_example.pdf}}%
  }
  \caption{Calibrated intervals for low vs.\ high mean epistemic scale $\bar{\kappa}$. Gray: nominal 95\% intervals ($\pm 1.960\,\sigma$). Red: split-conformal marginal intervals ($\pm q_{t,j}\,\sigma$), matching the target 95\% marginal coverage.}
  \label{fig:hyb_ci_example}
\end{figure}
\section{Discussion and limitations}
Risk-sensitive HRI, human-aware navigation, and safety filters need predictive envelopes for collision avoidance and personal-space constraints~\citep{lasota2017survey,kruse2013humanaware,sampieri2022chico}.
Overconfident tubes can leave insufficient safety margin under novel motions or shift, even when the mean forecast is accurate~\citep{kendall2017uncertainty,ovadia2019trust}.
SPARC addresses this with a single-pass epistemic risk score that inflates structured spatio-temporal covariance, while split conformal calibration restores target marginal coverage under exchangeability.
Practically, $\kappa$ can trigger conservative fallbacks using thresholds tuned on held-out calibration/validation data.

\paragraph{Limitations.}
The approach depends on the learned representation; poorly aligned features can weaken $\kappa$.
Conformal guarantees are marginal, and strong conditional guarantees remain open without additional assumptions~\citep{gibbs2023conditionalconformal}.
We report split-seed, fit-seed, and repeated-HPO diagnostics; hyperparameters are selected on held-out training windows and fixed before final calibration/evaluation.
Richer dependencies may require denser coordinate models at additional computational cost.

\section{Conclusion}
We introduced \emph{SPARC}, a hybrid uncertainty model combining conjugate Bayesian last-layer scaling, structured Gaussian aleatoric covariance, and conformal calibration.
It provides interpretable epistemic scaling, efficient inference, calibrated 95\% marginal prediction tubes, and low NLL across diverse motion forecasting datasets.
The full ablation supports \emph{SPARC} (hybrid MN-GraphJ with time-coupled $\kappa_t$ shrink) as the default, while FC-Out and DCT-pinv are analysis baselines.

\subsubsection*{Acknowledgments}
This work is funded by the federal state of Lower Saxony, Germany, as part of the project 76251-1337/2022 Kognitiv und Empathisch Intelligente Kobots -- ``KEIKO'' of the SPRUNG funding instrument.
We thank the anonymous reviewers for their constructive feedback.

%% file: tikz/arch_conjbayes_hybrid.tex
% SPARC formal model diagram: probabilistic core + post-hoc conformal layer.
\begin{tikzpicture}[
  font=\footnotesize,
  latent/.style={circle, draw=black!75, fill=orange!12, minimum size=8.4mm, inner sep=1.0pt, align=center},
  obs/.style={circle, draw=black!75, fill=blue!10, minimum size=8.4mm, inner sep=1.0pt, align=center},
  det/.style={rectangle, rounded corners=2.2pt, draw=black!75, fill=gray!10, inner sep=2.7pt, minimum height=7.2mm, align=center},
  param/.style={rectangle, rounded corners=2.2pt, draw=black!75, fill=green!10, inner sep=2.7pt, minimum height=7.2mm, align=center},
  cal/.style={rectangle, rounded corners=2.2pt, draw=black!75, fill=red!8, inner sep=2.7pt, minimum height=7.2mm, align=center},
  arrow/.style={->, >=Latex, semithick},
  note/.style={font=\scriptsize, text=black!70, align=center, fill=white, inner sep=1.2pt}
]

% -------------------- Panel A: probabilistic model --------------------
\node[obs]      (x)      at (0.4, 0.4)   {$x$};
\node[det]      (phi)    at (2.3, 0.4)   {$\phi_t(x)$};
\node[param]    (Lam)    at (3.8, 2.1)   {$\Lambda_t$};
\node[latent]   (B)      at (5.2, 2.1)   {$B_t$};
\node[det]      (mu)     at (5.2, 0.4)   {$\mu_t$};
\node[det]      (kappa)  at (4.4,-1.0)   {$\kappa_t$};
\node[det]      (ksh)    at (6.2,-1.0)   {$\tilde\kappa_t$};
\node[param]    (sstr)   at (4.4,-2.4)   {$\Sigma_{\mathrm{str},t}$};
\node[det]      (shyb)   at (6.2,-2.4)   {$\Sigma_{\mathrm{hyb},t}$};
\node[obs]      (y)      at (8.0, 0.4)   {$y_t$};
\node[det]      (stats)  at (10.2,0.4)   {$(\hat\mu_t,\hat\sigma_t)$};

\draw[arrow] (x) -- (phi);
\draw[arrow] (phi) -- (mu);
\draw[arrow] (Lam) -- (B);
\draw[arrow] (B) -- (mu);
\draw[arrow] (Lam.south) |- (kappa.north);
\draw[arrow] (phi.south) |- (kappa.west);
\draw[arrow] (kappa) -- (ksh);
\draw[arrow] (ksh) -- (shyb);
\draw[arrow] (sstr) -- (shyb);
\draw[arrow] (mu) -- (y);
\draw[arrow] (shyb.east) -| (y.south);
\draw[arrow] (y) -- (stats);

\node[note] at (4.35,-0.35) {$\kappa_t = 1 + \phi_t^\top \Lambda_t^{-1}\phi_t$};
\node[note] at (5.9,-2.95) {$\Sigma_{\mathrm{hyb},t}=\tilde\kappa_t\,\Sigma_{\mathrm{str},t}$};
\node[note] at (8.05,1.32) {$y_t \sim \mathcal N(\mu_t,\Sigma_{\mathrm{hyb},t})$};

\draw[rounded corners=3pt, draw=black!60] (1.6,-3.45) rectangle (12.15,2.7);
\node[font=\scriptsize\bfseries, anchor=south east, text=black!70] at (12.12,2.72) {A) Probabilistic core (for $t=1,\dots,H$)};

% -------------------- Panel B: conformal calibration --------------------
\node[param] (dcal)   at (2.4,-4.9) {$\mathcal D_{\mathrm{cal}}$};
\node[det]   (predin) at (10.2,-4.9) {Predictive summaries\\$(\hat\mu_{t,j},\hat\sigma_{t,j})$};
\node[cal, minimum width=45mm] (score) at (5.05,-6.7)
  {Nonconformity scores\\$s_{i,t,j,c}=\dfrac{|y_{i,t,j,c}-\hat\mu_{i,t,j,c}|}{\hat\sigma_{i,t,j,c}}$};
\node[cal, minimum width=30mm] (qt) at (9.45,-6.7)
  {Joint-wise quantiles\\$q_{t,j}=Q_{1-\alpha}(s_{\cdot,t,j,\cdot})$};
\node[cal, minimum width=44mm] (tube) at (13.75,-6.7)
  {Calibrated tubes\\$\widehat{\mathcal C}_{t,j}(x)=\hat\mu_{t,j}\pm q_{t,j}\hat\sigma_{t,j}$};

\draw[arrow] (stats.south) -- (predin.north);
\draw[arrow] (dcal.south) |- (score.north west);
\draw[arrow] (predin.south) |- (score.north east);
\draw[arrow] (score) -- (qt);
\draw[arrow] (qt) -- (tube);

\node[note, anchor=east] at (15.95,-7.95) {$\Pr\{y_{t,j}\in\widehat{\mathcal C}_{t,j}(X)\}\ge 1-\alpha$};
\node[note] at (4.15,-7.95) {$i$: calibration sample index,\quad $j$: joint};
\draw[rounded corners=3pt, draw=black!60] (1.6,-8.9) rectangle (16.8,-3.9);
\node[font=\scriptsize\bfseries, anchor=south east, text=black!70, fill=white, inner sep=1pt] at (16.77,-3.88) {B) Post-hoc split conformal layer};

\end{tikzpicture}

%% file: tables/overview_conjbayes_avg_rank.tex
\begin{table}[!htbp]
\centering
\caption{Mean rank (MR; $\downarrow$ better) across datasets. Ranks are computed per dataset and averaged over available datasets. $\mathrm{MR}_{\mathrm{MPJPE+NLL}}$ is the mean of MPJPE and NLL ranks (available for all models).}
\label{tab:overview_conjbayes_rank}
\scriptsize
\setlength{\tabcolsep}{5pt}
\renewcommand{\arraystretch}{1.10}
\begin{adjustbox}{max width=\linewidth}
\begin{tabular}{@{}lrrrr@{}}
\toprule
Model & $\mathrm{MR}_{\mathrm{MPJPE}}\downarrow$ & $\mathrm{MR}_{\mathrm{NLL}}\downarrow$ & $\mathrm{MR}_{W95(\mathrm{CP})}\downarrow$ & $\mathrm{MR}_{\mathrm{MPJPE+NLL}}\downarrow$ \\
\midrule
siMLPe (base) & 7.89 & 10.67 & -- & 9.28 \\
MeanFT (siMLPe fine-tune) & \textbf{2.22} & 6.11 & -- & 4.17 \\
AuxTasks (50/50) & 9.06 & 11.78 & -- & 10.42 \\
Symplectic (50/50) & 16.22 & 15.78 & -- & 16.00 \\
DeFeeNet (50/50) & 7.61 & 10.44 & -- & 9.03 \\
GSPS (50/50) & 7.67 & 9.00 & 5.44 & 8.33 \\
HumanMAC (50/50) & 10.22 & 10.00 & -- & 10.11 \\
SeSGCN (teacher) & 13.39 & 11.78 & -- & 12.58 \\
Deep ensemble & 8.22 & 5.56 & 5.39 & 6.89 \\
Bridge-CQR & 4.61 & \underline{3.00} & 9.11 & \underline{3.81} \\
SkeletonDiffusion (CVPR 2025)~\citep{curreli2025nonisotropic} & 15.00 & 13.67 & 9.78 & 14.33 \\
BeLFusion (ICCV 2023)~\citep{barquero2023belfusion} & 9.56 & 8.00 & 5.89 & 8.78 \\
TransFusion (RA-L 2024)~\citep{tian2024transfusion} & 12.44 & 9.78 & 4.78 & 11.11 \\
CoMusion (ECCV 2024)~\citep{sun2024comusion} & 6.06 & 6.44 & 3.78 & 6.25 \\
SPARD (AAAI 2026)~\citep{zhang2026spard} & 7.11 & 6.56 & \underline{3.39} & 6.83 \\
MotionMap (CVPR 2025)~\citep{hosseininejad2025motionmap} & 16.11 & 16.22 & 9.44 & 16.17 \\
SLD-HMP (ECCV 2024)~\citep{xu2024learning} & 13.22 & 15.22 & 6.50 & 14.22 \\
SPARC$^{\dagger}$ & \underline{4.39} & \textbf{1.00} & \textbf{2.50} & \textbf{2.69} \\
\bottomrule
\end{tabular}
\end{adjustbox}
\par\smallskip
{\footnotesize\textbf{Legend:} \textbf{bold} = best; \underline{underline} = second-best (ties included).}
\end{table}

%% file: figures/conjbayes_fc_out_ablation_tikz.tex
\begin{tikzpicture}[x=1cm,y=1cm, font=\scriptsize]
\node[anchor=south, rotate=30, font=\tiny] at (3.25,0.55) {$\kappa_t$};
\node[anchor=south, rotate=30, font=\tiny] at (4.00,0.55) {Hybrid};
\node[anchor=south, rotate=30, font=\tiny] at (4.75,0.55) {GraphJ};
\node[anchor=south, rotate=30, font=\tiny] at (5.50,0.55) {Cpl.};
\node[anchor=center] at (8.45,0.63) {$\Delta$ within-family $\mathrm{MR}_{\mathrm{MPJPE+NLL}}\uparrow$};
\node[anchor=center] at (11.60,0.63) {$\mathrm{MR}_{W95(\mathrm{CP})}\downarrow$};
\draw[black, line width=0.3pt] (7.35,0.20) -- (7.35,-3.40);
\draw[black, line width=0.3pt] (6.47,-3.50) -- (6.47,-3.56);
\node[anchor=north] at (6.47,-3.58) {-1};
\draw[black, line width=0.3pt] (7.35,-3.50) -- (7.35,-3.56);
\node[anchor=north] at (7.35,-3.58) {0};
\draw[black, line width=0.3pt] (8.23,-3.50) -- (8.23,-3.56);
\node[anchor=north] at (8.23,-3.58) {1};
\draw[black, line width=0.3pt] (9.11,-3.50) -- (9.11,-3.56);
\node[anchor=north] at (9.11,-3.58) {2};
\draw[black, line width=0.3pt] (9.99,-3.50) -- (9.99,-3.56);
\node[anchor=north] at (9.99,-3.58) {3};
\draw[black, line width=0.3pt] (6.47,-3.50) -- (9.99,-3.50);
\draw[black!55, line width=0.2pt] (10.65,0.83) -- (10.65,-3.40);
\node[anchor=west, align=left, text width=3.05cm] at (0.00,0.00) {base $\kappa(x)$};
\path[draw=black, line width=0.2pt] (3.11,-0.14) rectangle (3.39,0.14);
\path[draw=black, line width=0.2pt] (3.86,-0.14) rectangle (4.14,0.14);
\path[draw=black, line width=0.2pt] (4.61,-0.14) rectangle (4.89,0.14);
\path[draw=black, line width=0.2pt] (5.36,-0.14) rectangle (5.64,0.14);
\path[draw=black, line width=0.2pt, fill=black!15] (7.35,-0.14) rectangle (7.35,0.14);
\node[anchor=west] at (7.43,0.00) {5.28};
\node[anchor=center] at (11.60,0.00) {4.78};
\node[anchor=west, align=left, text width=3.05cm] at (0.00,-0.62) {timewise $\kappa_t$};
\path[draw=black, line width=0.2pt, fill=blue!55] (3.11,-0.76) rectangle (3.39,-0.48);
\path[draw=black, line width=0.2pt] (3.86,-0.76) rectangle (4.14,-0.48);
\path[draw=black, line width=0.2pt] (4.61,-0.76) rectangle (4.89,-0.48);
\path[draw=black, line width=0.2pt] (5.36,-0.76) rectangle (5.64,-0.48);
\path[draw=black, line width=0.2pt, fill=blue!40] (7.35,-0.76) rectangle (7.94,-0.48);
\node[anchor=west] at (8.02,-0.62) {4.61};
\node[anchor=center] at (11.60,-0.62) {3.56};
\node[anchor=west, align=left, text width=3.05cm] at (0.00,-1.24) {hybrid MN};
\path[draw=black, line width=0.2pt] (3.11,-1.38) rectangle (3.39,-1.10);
\path[draw=black, line width=0.2pt, fill=blue!55] (3.86,-1.38) rectangle (4.14,-1.10);
\path[draw=black, line width=0.2pt] (4.61,-1.38) rectangle (4.89,-1.10);
\path[draw=black, line width=0.2pt] (5.36,-1.38) rectangle (5.64,-1.10);
\path[draw=black, line width=0.2pt, fill=blue!40] (7.35,-1.38) rectangle (8.52,-1.10);
\node[anchor=west] at (8.60,-1.24) {3.94};
\node[anchor=center] at (11.60,-1.24) {4.00};
\node[anchor=west, align=left, text width=3.05cm] at (0.00,-1.86) {hybrid MN-GraphJ};
\path[draw=black, line width=0.2pt] (3.11,-2.00) rectangle (3.39,-1.72);
\path[draw=black, line width=0.2pt, fill=blue!55] (3.86,-2.00) rectangle (4.14,-1.72);
\path[draw=black, line width=0.2pt, fill=blue!55] (4.61,-2.00) rectangle (4.89,-1.72);
\path[draw=black, line width=0.2pt] (5.36,-2.00) rectangle (5.64,-1.72);
\path[draw=black, line width=0.2pt, fill=blue!40] (7.35,-2.00) rectangle (9.82,-1.72);
\node[anchor=west] at (9.90,-1.86) {2.47};
\node[anchor=center] at (11.60,-1.86) {4.78};
\node[anchor=west, align=left, text width=3.05cm] at (0.00,-2.48) {hybrid MN-GraphJ\\timewise $\kappa_t$};
\path[draw=black, line width=0.2pt, fill=blue!55] (3.11,-2.62) rectangle (3.39,-2.34);
\path[draw=black, line width=0.2pt, fill=blue!55] (3.86,-2.62) rectangle (4.14,-2.34);
\path[draw=black, line width=0.2pt, fill=blue!55] (4.61,-2.62) rectangle (4.89,-2.34);
\path[draw=black, line width=0.2pt] (5.36,-2.62) rectangle (5.64,-2.34);
\path[draw=black, line width=0.2pt, fill=blue!40] (7.35,-2.62) rectangle (9.77,-2.34);
\node[anchor=west] at (9.85,-2.48) {2.53};
\node[anchor=center] at (11.60,-2.48) {3.78};
\node[anchor=west, align=left, text width=3.05cm] at (0.00,-3.10) {SPARC\\coupled $\kappa_t$};
\path[draw=black, line width=0.2pt, fill=blue!55] (3.11,-3.24) rectangle (3.39,-2.96);
\path[draw=black, line width=0.2pt, fill=blue!55] (3.86,-3.24) rectangle (4.14,-2.96);
\path[draw=black, line width=0.2pt, fill=blue!55] (4.61,-3.24) rectangle (4.89,-2.96);
\path[draw=black, line width=0.2pt, fill=blue!55] (5.36,-3.24) rectangle (5.64,-2.96);
\path[draw=black, line width=0.2pt, fill=blue!40] (7.35,-3.24) rectangle (9.21,-2.96);
\node[anchor=west] at (9.29,-3.10) {3.17};
\node[anchor=center] at (11.60,-3.10) {\textbf{2.56}};
\end{tikzpicture}

%% file: paper_supp.tex
\appendix
\section{Human3.6M detailed results}
\label{sec:app_h36m}
\begingroup
\centering
\captionof{table}{Human3.6M (H=25; seed=304, $n_{\mathrm{cal}}{=}512$, $n_{\mathrm{eval}}{=}1024$, $\alpha{=}0.05$): component isolation with a fixed mean predictor. MatrixNormal, MN-GraphJ, and SPARC share the same MeanFT mean (freeze-mean), so differences reflect uncertainty modeling; conformal calibration (\cp) moves coverage toward the 0.95 target with modest width change.}
\label{tab:h36m_hyb_main}
\scriptsize
\setlength{\tabcolsep}{4pt}
\resizebox{\linewidth}{!}{%
\begin{tabular}{@{}lrrrrrr@{}}
  \toprule
  Model & MPJPE$\downarrow$ & NLL$\downarrow$ & Cov95 & W95$\downarrow$ & Cov95(\cp) & W95(\cp)$\downarrow$ \\
  \midrule
	MeanFT (deterministic) + fixed $\sigma_{\mathrm{obs}}$ & 0.0721 & 5.27 & 0.715 & 0.067 & -- & -- \\
	DCT-Bridge (diagonal Gaussian) & 0.0721 & -1.03 & 0.903 & 0.184 & 0.945 & 0.257 \\
	MatrixNormal (freeze-mean) & 0.0721 & -1.26 & 0.925 & 0.216 & 0.945 & 0.253 \\
	MatrixNormal-GraphJ (freeze-mean) & 0.0721 & -1.66 & 0.868 & 0.137 & 0.945 & 0.251 \\
	\textbf{SPARC$^{\dagger}$} & 0.0721 & -1.66 & 0.868 & 0.137 & 0.946 & 0.248 \\
  \bottomrule
\end{tabular}}
\par\smallskip
{\footnotesize $^{\dagger}$ (MN-GraphJ, time-coupled $\kappa_t$, shrink).}
\endgroup

\section{Split-seed robustness}
\label{sec:app_split_seed}
To check that the calibrated tube metrics are not an artifact of the single fixed calibration/evaluation permutation used in the main benchmark, we re-evaluate fixed SPARC checkpoints over multiple split seeds.
Only the held-out evaluation-window permutation changes; trained weights, covariance parameters, conjugate statistics, and tuned hyperparameters remain fixed.
\input{tables/conjgraph_cp_split_seed_stability.tex}

We repeat the full held-out HPO on three representative datasets with different Optuna-TPE seeds to test whether the tuned recipe is tied to a single Optuna trajectory.
Checkpoint inputs, budget, held-out objective, and the five-parameter search space ($\lambda_0$, $\rho$, $\gamma$, $\lambda_{0,\mathrm{joint}}$, $\gamma_{\mathrm{joint}}$) are unchanged.
\input{tables/conjgraph_cp_hpo_seed_stability.tex}

Finally, we isolate fit-seed variability after HPO.
We keep the selected full-budget HPO hyperparameters, input checkpoints, and calibration/evaluation protocol fixed, and re-fit the SPARC uncertainty checkpoint with different RNG seeds.
\input{tables/conjgraph_cp_fit_seed_stability.tex}

\section{Cross-dataset detailed results}
\label{sec:app_cross_dataset}
\input{tables/overview_conjbayes_all_datasets.tex}

\section{Full FC-Out ablation (per-dataset)}
\label{sec:app_ablation_full}
These tables isolate component effects on point accuracy and Gaussian density quality.
CP coverage/width is part of the complete calibrated deployment recipe and is therefore reported for every FC-Out ablation variant under the same fixed protocol.
This makes the MPJPE/NLL trade-off and calibrated tube-efficiency trade-off directly comparable.
\input{tables/ablation_conjbayes_fc_out_components.tex}
\input{tables/ablation_conjbayes_fc_out_avg_rank.tex}
\FloatBarrier
\input{tables/ablation_conjbayes_fc_out_all_datasets.tex}

\section{Epistemic scale diagnostics}
\label{sec:app_kappa_diag}
The diagnostics focus on three decisions: uncertainty inflation, error stratification by $\kappa$, and post-hoc selective deployment.
Fig.~\ref{fig:hyb_kappa_diag} shows how epistemic scaling inflates uncertainty and whether $\kappa$ stratifies error monotonically; Table~\ref{tab:conjgraph_variants_all} reports the conformal variants enabled by $\kappa$ and the structured covariance; Table~\ref{tab:kappa_selective} gives the selective-deployment analysis used only as a diagnostic.

\begin{figure}[t]
  \centering
  \begin{minipage}[t]{0.49\linewidth}
    \centering
    \IfFileExists{figures/conjbayes_uncertainty_inflation.pdf}{%
      \includegraphics[width=\linewidth]{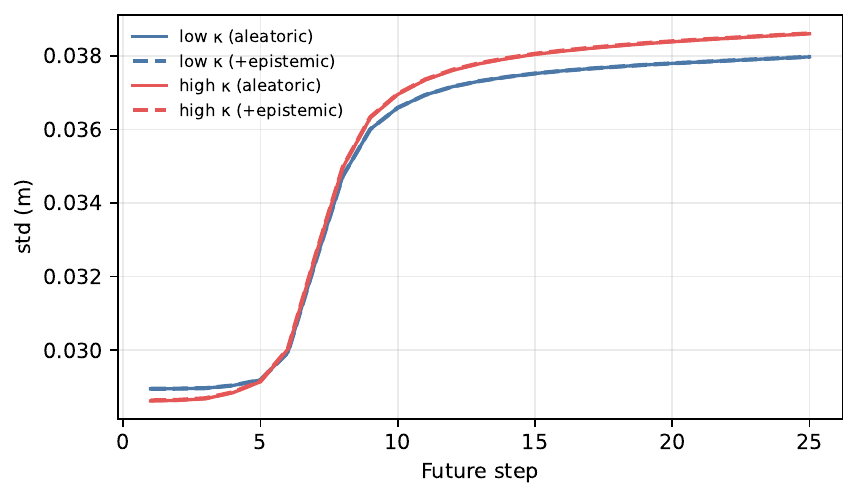}%
    }{%
      \fbox{\parbox{0.95\linewidth}{Missing figure: figures/conjbayes_uncertainty_inflation.pdf}}%
    }
  \end{minipage}\hfill
  \begin{minipage}[t]{0.49\linewidth}
    \centering
    \IfFileExists{figures/conjbayes_kappa_binned_error.pdf}{%
      \includegraphics[width=\linewidth]{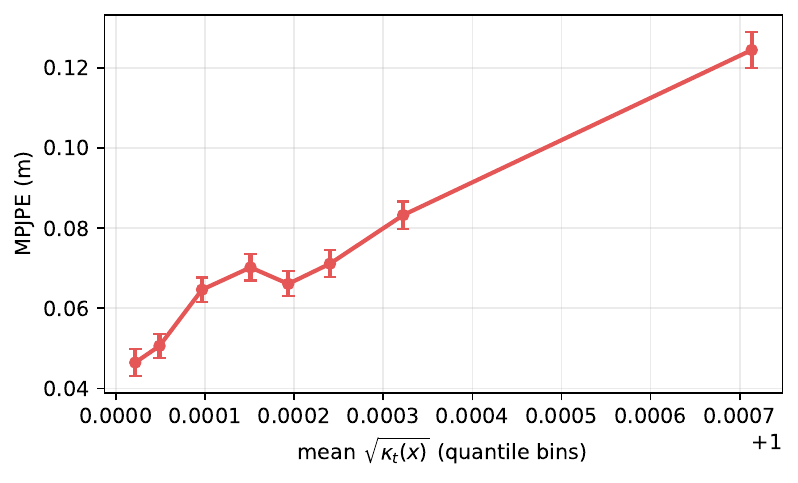}%
    }{%
      \fbox{\parbox{0.95\linewidth}{Missing figure: figures/conjbayes_kappa_binned_error.pdf}}%
    }
  \end{minipage}
  \caption{Epistemic scale diagnostics. \textbf{(a)} $\kappa_t$ inflates the structured marginal standard deviation for low- vs.\ high-$\kappa$ examples. \textbf{(b)} MPJPE increases across bins of mean $\sqrt{\kappa_t(x)}$ (error bars: standard error), supporting $\kappa$ as a risk stratifier beyond average coverage metrics.}
  \label{fig:hyb_kappa_diag}
\end{figure}

\input{tables/conjgraph_variants_all_datasets.tex}
\input{tables/conjbayes_kappa_selective_h36m.tex}
\input{tables/h36m_conjgraph_kappa_mondrian.tex}
\input{tables/h36m_conjgraph_traj_set.tex}

\section{Controlled synthetic uncertainty benchmark}
\label{sec:app_synth}
To complement real-dataset evaluation, we run a controlled synthetic benchmark with known uncertainty structure and test whether SPARC recovers the intended epistemic/aleatoric components.
Given observed prefixes, we generate future trajectories as
\[
y_t = \mu_t^\star(x) + \varepsilon_t,\qquad
\varepsilon \sim \mathcal{N}\!\bigl(0,\ \kappa_t^\star(x)\,\Sigma_T^\star \otimes \Sigma_C^\star\bigr),
\]
where $\Sigma_T^\star$ is first-order autoregressive (AR(1)) temporal covariance, $\Sigma_C^\star$ is graph-coupled coordinate covariance, and $\kappa_t^\star(x)$ is an input-dependent heteroscedastic scale.
We use a protocol-aligned split ($n_{\mathrm{train}}{=}16{,}000$, $n_{\mathrm{cal}}{=}512$, $n_{\mathrm{eval}}{=}1024$) plus a shifted evaluation split.
Figure~\ref{fig:synth_dataset_overview} first summarizes what the synthetic benchmark encodes: in-distribution vs.\ shifted trajectory families, the induced uncertainty-regime shift in $\sqrt{\kappa_t^\star}$, and horizon-wise uncertainty profiles.

\input{tables/synthetic_conjgraph_tuning_summary.tex}

Table~\ref{tab:synth_tuning_summary} compares the legacy synthetic fit to our tuned setup (feature-standardized Bayesian precision fit + calibrated epistemic sensitivity scale).
		The tuned setup improves epistemic tracking ($r_\kappa$: $0.475\!\rightarrow\!0.499$ in-distribution; $0.820\!\rightarrow\!0.848$ under shift), where $r_\kappa$ is the Pearson correlation between the predicted and true sequence-level mean $\sqrt{\kappa_t^\star}$.
It maintains temporal component recovery (temporal-correlation mean absolute error (MAE) $0.022\!\rightarrow\!0.022$), improves joint-correlation MAE ($0.103\!\rightarrow\!0.100$), and improves shifted conformal coverage ($0.667\!\rightarrow\!0.847$) while keeping in-distribution Cov95(\cp) close to target ($\approx0.949$).
Figure~\ref{fig:synth_conjgraph_recovery} visualizes these recovered components: panels (a)--(b) preserve the AR-style temporal correlation geometry, panel (c) shows tighter true-vs.-learned $\sqrt{\kappa}$ alignment (with stronger separation under shift), and panels (d)--(f) show improved graph-coupled correlation recovery and risk-profile matching over horizon.
\paragraph{$\kappa$-conditioned (Mondrian) conformal tubes.}
Beyond global split conformal, we consider a $\kappa$-conditioned (Mondrian) variant that bins sequences by mean $\sqrt{\kappa_t(x)}$ and calibrates group-wise quantiles, yielding risk-adaptive tube scaling.
Table~\ref{tab:synth_kappa_mondrian} reports Cov95/W95 by $\kappa$-bin for both unconditioned \cp and the Mondrian variant.
\input{tables/synthetic_conjgraph_kappa_mondrian.tex}

\paragraph{Trajectory-level conformal sets.}
We also construct a trajectory-level conformal ellipsoid using a structured Mahalanobis/whitened-residual score under $\Sigma_T^{\mathrm{hyb}}\otimes\Sigma_C$ as an alternative to axis-aligned tubes.
Table~\ref{tab:synth_traj_set} compares coverage and set volume against marginal \cp rectangles.
\input{tables/synthetic_conjgraph_traj_set.tex}
\begin{figure}[t]
  \centering
  \begin{minipage}[t]{0.49\linewidth}
    \centering
    \IfFileExists{figures/synthetic_conjgraph_dataset_overview_a_id_traj.pdf}{%
      \includegraphics[width=\linewidth]{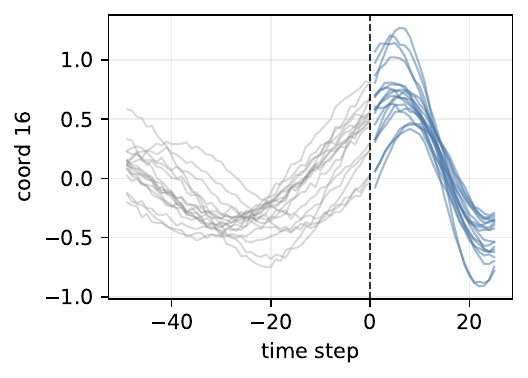}%
    }{%
      \fbox{\parbox{0.95\linewidth}{Missing figure: figures/synthetic_conjgraph_dataset_overview_a_id_traj.pdf}}%
    }
  \end{minipage}\hfill
  \begin{minipage}[t]{0.49\linewidth}
    \centering
    \IfFileExists{figures/synthetic_conjgraph_dataset_overview_b_shift_traj.pdf}{%
      \includegraphics[width=\linewidth]{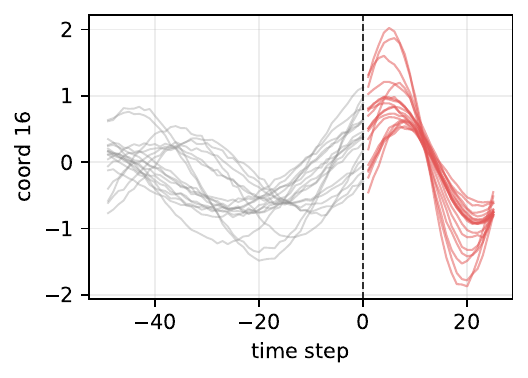}%
    }{%
      \fbox{\parbox{0.95\linewidth}{Missing figure: figures/synthetic_conjgraph_dataset_overview_b_shift_traj.pdf}}%
    }
  \end{minipage}

  \vspace{1.5mm}

  \begin{minipage}[t]{0.49\linewidth}
    \centering
    \IfFileExists{figures/synthetic_conjgraph_dataset_overview_c_kappa_hist.pdf}{%
      \includegraphics[width=\linewidth]{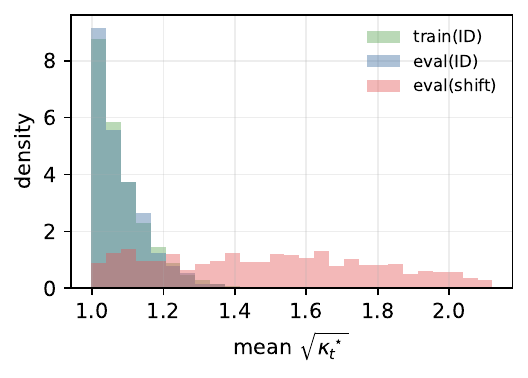}%
    }{%
      \fbox{\parbox{0.95\linewidth}{Missing figure: figures/synthetic_conjgraph_dataset_overview_c_kappa_hist.pdf}}%
    }
  \end{minipage}\hfill
  \begin{minipage}[t]{0.49\linewidth}
    \centering
    \IfFileExists{figures/synthetic_conjgraph_dataset_overview_d_kappa_profile.pdf}{%
      \includegraphics[width=\linewidth]{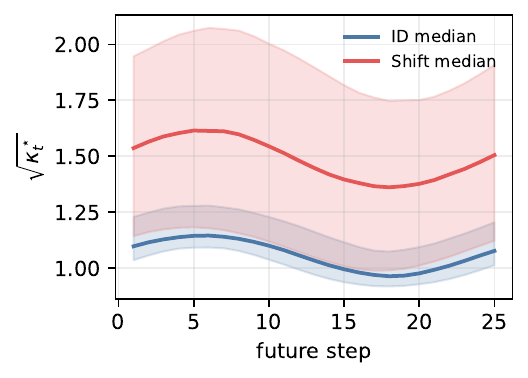}%
    }{%
      \fbox{\parbox{0.95\linewidth}{Missing figure: figures/synthetic_conjgraph_dataset_overview_d_kappa_profile.pdf}}%
    }
  \end{minipage}
  \caption{Synthetic dataset overview used for SPARC stress testing. \textbf{(a)} Representative in-distribution trajectory slices (same coordinate) with observed prefix and future continuation. \textbf{(b)} Representative shifted trajectory slices. \textbf{(c)} Distribution shift of the true mean epistemic scale $\sqrt{\kappa_t^\star}$. \textbf{(d)} Horizon-wise 10--50--90\% profiles of $\sqrt{\kappa_t^\star}$ for in-distribution vs.\ shifted splits.}
  \label{fig:synth_dataset_overview}
\end{figure}
\begin{figure}[t]
	  \centering
	  \begin{minipage}[t]{0.32\linewidth}
	    \centering
	    \IfFileExists{figures/synthetic_conjgraph_recovery_a_corr_t_true.pdf}{%
	      \includegraphics[width=\linewidth]{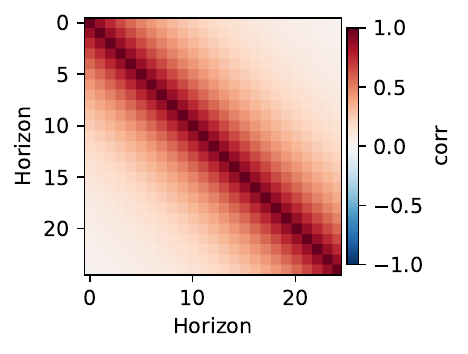}%
	    }{%
	      \fbox{\parbox{0.95\linewidth}{Missing figure: figures/synthetic_conjgraph_recovery_a_corr_t_true.pdf}}%
	    }
	  \end{minipage}\hfill
	  \begin{minipage}[t]{0.32\linewidth}
	    \centering
	    \IfFileExists{figures/synthetic_conjgraph_recovery_b_corr_t_learned.pdf}{%
	      \includegraphics[width=\linewidth]{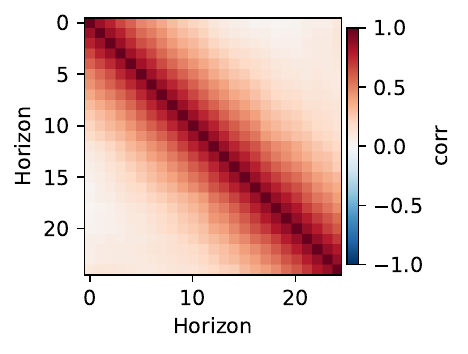}%
	    }{%
	      \fbox{\parbox{0.95\linewidth}{Missing figure: figures/synthetic_conjgraph_recovery_b_corr_t_learned.pdf}}%
	    }
	  \end{minipage}\hfill
	  \begin{minipage}[t]{0.32\linewidth}
	    \centering
	    \IfFileExists{figures/synthetic_conjgraph_recovery_c_kappa_scatter.pdf}{%
	      \includegraphics[width=\linewidth]{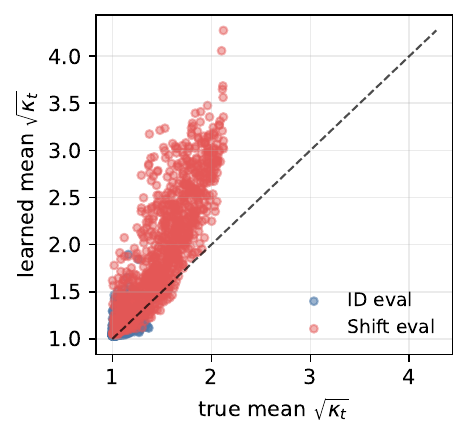}%
	    }{%
	      \fbox{\parbox{0.95\linewidth}{Missing figure: figures/synthetic_conjgraph_recovery_c_kappa_scatter.pdf}}%
	    }
	  \end{minipage}

	  \vspace{1.5mm}

	  \begin{minipage}[t]{0.32\linewidth}
	    \centering
	    \IfFileExists{figures/synthetic_conjgraph_recovery_d_corr_j_true.pdf}{%
	      \includegraphics[width=\linewidth]{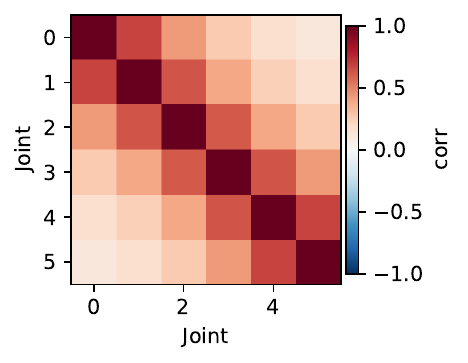}%
	    }{%
	      \fbox{\parbox{0.95\linewidth}{Missing figure: figures/synthetic_conjgraph_recovery_d_corr_j_true.pdf}}%
	    }
	  \end{minipage}\hfill
	  \begin{minipage}[t]{0.32\linewidth}
	    \centering
	    \IfFileExists{figures/synthetic_conjgraph_recovery_e_corr_j_learned.pdf}{%
	      \includegraphics[width=\linewidth]{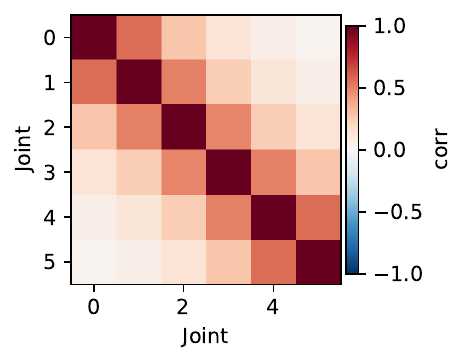}%
	    }{%
	      \fbox{\parbox{0.95\linewidth}{Missing figure: figures/synthetic_conjgraph_recovery_e_corr_j_learned.pdf}}%
	    }
	  \end{minipage}\hfill
	  \begin{minipage}[t]{0.32\linewidth}
	    \centering
	    \IfFileExists{figures/synthetic_conjgraph_recovery_f_risk_profile.pdf}{%
	      \includegraphics[width=\linewidth]{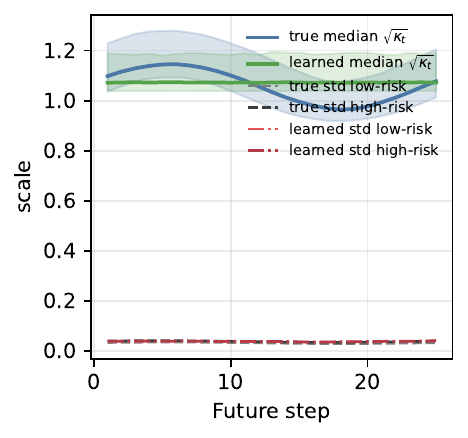}%
	    }{%
	      \fbox{\parbox{0.95\linewidth}{Missing figure: figures/synthetic_conjgraph_recovery_f_risk_profile.pdf}}%
	    }
	  \end{minipage}

	  \caption{Synthetic component-recovery benchmark for SPARC. \textbf{(a)} True temporal correlation. \textbf{(b)} Learned temporal correlation. \textbf{(c)} True vs.\ learned epistemic scale ($\sqrt{\kappa}$) on in-distribution and shifted splits. \textbf{(d)} True graph-coupled joint correlation. \textbf{(e)} Learned joint correlation. \textbf{(f)} Risk-profile recovery over horizon.}
	  \label{fig:synth_conjgraph_recovery}
	\end{figure}
The synthetic stress test indicates that SPARC can (i) preserve and recover structured spatio-temporal correlations while (ii) tracking an input-dependent epistemic scale that responds under shift. The conformal variants show how $\kappa$ and the structured covariance support risk-adaptive calibration and coherent trajectory-level sets.

\section{Compute and storage overhead}
\label{sec:app_compute}
To contextualize the practical cost of uncertainty layers, Table~\ref{tab:appendix_compute_cost} reports parameter counts, additional stored non-parameter state (for the analytic $\kappa$ statistics), and single-sample inference latency on a representative Human3.6M test sequence (batch size 1, horizon $H{=}25$).
Absolute times depend on hardware and implementation details; the intent is to quantify the order-of-magnitude overhead relative to the deterministic backbone and to contrast analytic SPARC against sampling-based epistemic baselines.
For stochastic baselines we measure end-to-end cost to form predictive mean/std from $K$ samples (reported in the table); for the heaviest external baselines we only report GPU timings.
\input{tables/runtime_structured.tex}
\input{tables/appendix_compute_cost.tex}

%% file: tables/conjgraph_cp_split_seed_stability.tex
\begin{table}[t]
\centering
\caption{Split-seed robustness of SPARC. We keep the trained checkpoint and all tuned hyperparameters fixed, vary only the RNG seed used to split held-out evaluation windows into calibration and evaluation subsets, and report mean $\pm$ standard deviation over split seeds. Calibration computes only split-conformal quantiles; no weights, covariance parameters, conjugate statistics, or hyperparameters are updated.}
\label{tab:conjgraph_cp_split_seed_stability}
\scriptsize
\setlength{\tabcolsep}{4pt}
\begin{tabular}{@{}lrrrrrr@{}}
\toprule
Dataset & Seeds & $n_{\mathrm{cal}}$ & $n_{\mathrm{eval}}$ & MPJPE$\downarrow$ & NLL$\downarrow$ & Cov95(\cp) / W95(\cp)$\downarrow$ \\
\midrule
H36M & 5 & 512 & 1024 & 0.0710 $\pm$ 0.0010 & -1.67 $\pm$ 0.01 & 0.950 $\pm$ 0.004 / 0.258 $\pm$ 0.007 \\
AMASS & 5 & 512 & 1024 & 0.0584 $\pm$ 0.0012 & -2.37 $\pm$ 0.06 & 0.953 $\pm$ 0.004 / 0.212 $\pm$ 0.005 \\
AnDy & 5 & 512 & 1024 & 0.0310 $\pm$ 0.0011 & -3.54 $\pm$ 0.02 & 0.951 $\pm$ 0.004 / 0.139 $\pm$ 0.004 \\
CHICO & 5 & 512 & 1024 & 0.0446 $\pm$ 0.0006 & -2.19 $\pm$ 0.01 & 0.951 $\pm$ 0.002 / 0.135 $\pm$ 0.003 \\
CMU & 5 & 512 & 320 & 2.7695 $\pm$ 0.0636 & 10.14 $\pm$ 0.79 & 0.950 $\pm$ 0.005 / 8.479 $\pm$ 0.228 \\
HA4M & 5 & 13 & 13 & 0.1214 $\pm$ 0.0016 & -2.28 $\pm$ 0.01 & 0.974 $\pm$ 0.008 / 0.256 $\pm$ 0.002 \\
LaFAN1 & 5 & 512 & 1024 & 0.5437 $\pm$ 0.0069 & -0.83 $\pm$ 0.01 & 0.951 $\pm$ 0.002 / 1.568 $\pm$ 0.011 \\
LaFAN1-HF & 5 & 512 & 1024 & 0.3166 $\pm$ 0.0069 & -0.05 $\pm$ 0.01 & 0.950 $\pm$ 0.003 / 0.939 $\pm$ 0.022 \\
3DPW & 5 & 512 & 1024 & 0.0583 $\pm$ 0.0012 & -1.92 $\pm$ 0.05 & 0.950 $\pm$ 0.003 / 0.192 $\pm$ 0.003 \\
\bottomrule
\end{tabular}
\par\smallskip{\footnotesize Split seeds: 101, 202, 304, 405, 506.}
\end{table}

%% file: tables/conjgraph_cp_hpo_seed_stability.tex
\begin{table}[t]
\centering
\caption{HPO-seed robustness of SPARC. We repeat held-out Optuna-TPE HPO over $\lambda_0$, $\rho$, $\gamma$, $\lambda_{0,\mathrm{joint}}$, and $\gamma_{\mathrm{joint}}$, and report mean $\pm$ standard deviation over HPO seeds. Seed 304 is the submitted full-budget run; the additional seeds use the same checkpoint inputs, search space, budget, and held-out objective.}
\label{tab:conjgraph_cp_hpo_seed_stability}
\scriptsize
\setlength{\tabcolsep}{4pt}
\resizebox{\linewidth}{!}{%
\begin{tabular}{@{}lrrrrr@{}}
\toprule
Dataset & HPO seeds & Trials/seed & NLL$\downarrow$ & Cov95(\cp) & W95(\cp)$\downarrow$ \\
\midrule
H36M & 101,304,506 & 96 & -1.664999 $\pm$ 0.000037 & 0.945531 $\pm$ 0.000002 & 0.248099 $\pm$ 0.000001 \\
CHICO & 101,304,506 & 96 & -2.207060 $\pm$ 0.000003 & 0.948860 $\pm$ 0.000000 & 0.130802 $\pm$ 0.000000 \\
3DPW & 101,304,506 & 96 & -1.943978 $\pm$ 0.000014 & 0.946786 $\pm$ 0.000000 & 0.187710 $\pm$ 0.000001 \\
\bottomrule
\end{tabular}
}
\end{table}

%% file: tables/conjgraph_cp_fit_seed_stability.tex
\begin{table}[t]
\centering
\caption{Fit-seed robustness of SPARC with fixed HPO hyperparameters. For each dataset we keep the selected full-budget HPO hyperparameters, input checkpoints, and calibration/evaluation protocol fixed, and re-fit the SPARC uncertainty checkpoint with different RNG seeds.}
\label{tab:conjgraph_cp_fit_seed_stability}
\scriptsize
\setlength{\tabcolsep}{4pt}
\resizebox{\linewidth}{!}{%
\begin{tabular}{@{}lrrrrr@{}}
\toprule
Dataset & Fit seeds & NLL$\downarrow$ & Cov95(\cp) & W95(\cp)$\downarrow$ & MPJPE$\downarrow$ \\
\midrule
H36M & 101,304,506 & -1.664890 $\pm$ 0.000007 & 0.945531 $\pm$ 0.000004 & 0.248107 $\pm$ 0.000001 & 0.072110 $\pm$ 0.000000 \\
CHICO & 101,304,506 & -2.207059 $\pm$ 0.000004 & 0.948859 $\pm$ 0.000002 & 0.130801 $\pm$ 0.000000 & 0.044460 $\pm$ 0.000000 \\
3DPW & 101,304,506 & -1.943956 $\pm$ 0.000005 & 0.946787 $\pm$ 0.000003 & 0.187711 $\pm$ 0.000000 & 0.058032 $\pm$ 0.000000 \\
\bottomrule
\end{tabular}
}
\end{table}

%% file: tables/overview_conjbayes_all_datasets.tex
\begingroup
\tiny
\setlength{\tabcolsep}{1.2pt}
\renewcommand{\arraystretch}{1.06}

\refstepcounter{table}
\noindent\textbf{Table \thetable:} Baselines vs.\ SPARC$^{\dagger}$ across datasets (H=25). Metrics: MPJPE/FDE, NLL, and split-conformal 95\% tube coverage/width (Cov95(CP)/W95(CP)); for Cov95(CP), ranking is by closeness to 0.95.\label{tab:overview_conjbayes_all}
\par\medskip

\tablefirsthead{%
\toprule
& \multicolumn{4}{c}{AMASS} & \multicolumn{4}{c}{LaFAN1} \\
\cmidrule(lr){2-5}\cmidrule(lr){6-9}
Model &
\shortstack{MPJPE/FDE\\$\downarrow$} & \shortstack{NLL\\$\downarrow$} & \shortstack{Cov95\\{\tiny(CP)}} & \shortstack{W95\\{\tiny(CP)}$\downarrow$} &
\shortstack{MPJPE/FDE\\$\downarrow$} & \shortstack{NLL\\$\downarrow$} & \shortstack{Cov95\\{\tiny(CP)}} & \shortstack{W95\\{\tiny(CP)}$\downarrow$} \\
\midrule
}
\tablehead{%
\multicolumn{9}{c}{\footnotesize\textbf{Table~\ref{tab:overview_conjbayes_all} (continued)}}\\
\toprule
& \multicolumn{4}{c}{AMASS} & \multicolumn{4}{c}{LaFAN1} \\
\cmidrule(lr){2-5}\cmidrule(lr){6-9}
Model &
\shortstack{MPJPE/FDE\\$\downarrow$} & \shortstack{NLL\\$\downarrow$} & \shortstack{Cov95\\{\tiny(CP)}} & \shortstack{W95\\{\tiny(CP)}$\downarrow$} &
\shortstack{MPJPE/FDE\\$\downarrow$} & \shortstack{NLL\\$\downarrow$} & \shortstack{Cov95\\{\tiny(CP)}} & \shortstack{W95\\{\tiny(CP)}$\downarrow$} \\
\midrule
}
\tabletail{%
\bottomrule
\multicolumn{9}{r}{\footnotesize Continued on next page}\\
}
\tablelasttail{\bottomrule}

\centering
\begin{supertabular}{@{}l *{8}{c}@{}}
siMLPe (base) & 0.0954 / 0.1230 & 13.55 & -- & -- & 0.5453 / 0.7066 & 319.27 & -- & -- \\
MeanFT (siMLPe fine-tune) & \textbf{0.0593 / 0.0840} & 3.16 & -- & -- & \underline{0.5358 / 0.6729} & 303.33 & -- & -- \\
AuxTasks (50/50) & 0.0949 / 0.1253 & 12.39 & -- & -- & 0.5827 / 0.6938 & 353.83 & -- & -- \\
Symplectic (50/50) & 0.1424 / 0.4436 & 39.70 & -- & -- & 1.4504 & 3203.83 & -- & -- \\
DeFeeNet (50/50) & 0.0909 / 0.1547 & 12.15 & -- & -- & 0.5666 & 341.82 & -- & -- \\
GSPS (50/50) & 0.1000 / 0.1318 & 14.03 & \textbf{0.950} & 0.311 & 0.5408 / 0.6939 & 282.97 & 0.953 & 1.945 \\
HumanMAC (50/50) & 0.0722 / 0.0935 & 4.34 & -- & -- & 0.5564 & 290.09 & -- & -- \\
SeSGCN (teacher) & 0.1029 / 0.1095 & 9.48 & -- & -- & 0.5872 / 0.6457 & 314.76 & -- & -- \\
Deep ensemble & 0.0954 / 0.1248 & 2.60 & \textbf{0.950} & 0.283 & 0.5388 / 0.7121 & \underline{43.45} & 0.953 & 1.856 \\
Bridge-CQR & \textbf{0.0593 / 0.0840} & \underline{0.15} & 0.954 & 0.384 & \underline{0.5358 / 0.6729} & 139.07 & \textbf{0.950} & 1.907 \\
SkeletonDiffusion (CVPR 2025) & 0.0971 / 0.1049 & 9.02 & \textbf{0.950} & 0.334 & 0.6028 / 0.6769 & 280.23 & \underline{0.951} & 4.209 \\
BeLFusion (ICCV 2023) & 0.0792 / 0.1015 & 3.84 & 0.952 & 0.235 & 0.5618 / 0.7022 & 251.70 & 0.952 & 1.894 \\
TransFusion (RA-L 2024) & 0.0790 / 0.0972 & 3.76 & \underline{0.951} & 0.198 & 0.6496 / 0.7432 & 324.90 & \textbf{0.950} & 1.847 \\
CoMusion (ECCV 2024) & \underline{0.0617 / 0.0836} & 1.49 & 0.952 & \textbf{0.174} & \textbf{0.5268 / 0.6421} & 219.66 & \underline{0.949} & \underline{1.590} \\
SPARD (AAAI 2026) & 0.0638 / 0.0880 & 1.51 & 0.953 & \underline{0.175} & 0.5682 / 0.7141 & 292.27 & 0.955 & 1.759 \\
MotionMap (CVPR 2025) & 0.2038 / 0.2012 & 16.42 & \underline{0.949} & 0.536 & 0.7463 / 0.7614 & 351.70 & \textbf{0.950} & 2.006 \\
SLD-HMP (ECCV 2024) & 0.1120 / 0.1252 & 22.45 & \underline{0.949} & 0.308 & 0.6091 / 0.7049 & 491.37 & 0.955 & 1.936 \\
SPARC$^{\dagger}$ & \textbf{0.0593 / 0.0840} & \textbf{-2.37} & 0.952 & 0.212 & \underline{0.5358 / 0.6729} & \textbf{-0.84} & 0.954 & \textbf{1.568} \\
\end{supertabular}

\par\medskip

\tablefirsthead{%
\multicolumn{9}{c}{\footnotesize\textbf{Table~\ref{tab:overview_conjbayes_all} (continued)}}\\
\toprule
& \multicolumn{4}{c}{LaFAN1-HF} & \multicolumn{4}{c}{CMU-MoCap} \\
\cmidrule(lr){2-5}\cmidrule(lr){6-9}
Model &
\shortstack{MPJPE/FDE\\$\downarrow$} & \shortstack{NLL\\$\downarrow$} & \shortstack{Cov95\\{\tiny(CP)}} & \shortstack{W95\\{\tiny(CP)}$\downarrow$} &
\shortstack{MPJPE/FDE\\$\downarrow$} & \shortstack{NLL\\$\downarrow$} & \shortstack{Cov95\\{\tiny(CP)}} & \shortstack{W95\\{\tiny(CP)}$\downarrow$} \\
\midrule
}
\tablehead{%
\multicolumn{9}{c}{\footnotesize\textbf{Table~\ref{tab:overview_conjbayes_all} (continued)}}\\
\toprule
& \multicolumn{4}{c}{LaFAN1-HF} & \multicolumn{4}{c}{CMU-MoCap} \\
\cmidrule(lr){2-5}\cmidrule(lr){6-9}
Model &
\shortstack{MPJPE/FDE\\$\downarrow$} & \shortstack{NLL\\$\downarrow$} & \shortstack{Cov95\\{\tiny(CP)}} & \shortstack{W95\\{\tiny(CP)}$\downarrow$} &
\shortstack{MPJPE/FDE\\$\downarrow$} & \shortstack{NLL\\$\downarrow$} & \shortstack{Cov95\\{\tiny(CP)}} & \shortstack{W95\\{\tiny(CP)}$\downarrow$} \\
\midrule
}
\tabletail{%
\bottomrule
\multicolumn{9}{r}{\footnotesize Continued on next page}\\
}
\tablelasttail{\bottomrule}

\centering
\begin{supertabular}{@{}l *{8}{c}@{}}
siMLPe (base) & 0.4046 / 0.6400 & 172.36 & -- & -- & 2.6560 & 8726.23 & -- & -- \\
MeanFT (siMLPe fine-tune) & \textbf{0.3093} & 99.80 & -- & -- & 2.6709 & 8820.46 & -- & -- \\
AuxTasks (50/50) & 0.4299 & 184.26 & -- & -- & \textbf{2.5429} & 8292.99 & -- & -- \\
Symplectic (50/50) & 0.4756 & 280.68 & -- & -- & 6.8024 / 10.4453 & 54096.87 & -- & -- \\
DeFeeNet (50/50) & 0.3992 / 0.6459 & 169.44 & -- & -- & \underline{2.6316} & 8633.05 & -- & -- \\
GSPS (50/50) & 0.3833 / 0.6384 & 148.57 & \textbf{0.950} & 1.225 & 2.6321 / 3.7290 & 8307.55 & 0.952 & \underline{9.785} \\
HumanMAC (50/50) & 0.4477 & 166.64 & -- & -- & 4.8433 & 21062.22 & -- & -- \\
SeSGCN (teacher) & 0.4061 / 0.5435 & 138.80 & -- & -- & 4.9092 / 5.1552 & 20996.24 & -- & -- \\
Deep ensemble & 0.3894 / 0.6260 & 80.18 & \textbf{0.950} & 1.167 & 2.7084 / 3.7452 & \underline{1310.38} & \underline{0.951} & 9.855 \\
Bridge-CQR & \underline{0.3103 / 0.4719} & \underline{45.42} & 0.962 & 1.251 & 2.7454 / 3.9290 & 4183.52 & 0.953 & 9.923 \\
SkeletonDiffusion (CVPR 2025) & 1.8071 / 1.8613 & 3357.02 & 0.959 & 32.988 & 8.0957 / 8.3124 & 49441.83 & \textbf{0.950} & 381.727 \\
BeLFusion (ICCV 2023) & 1.0006 / 1.6484 & 520.35 & 0.961 & 3.049 & 3.6186 / 4.4319 & 12261.03 & 0.954 & 14.329 \\
TransFusion (RA-L 2024) & 1.4215 / 2.0418 & 1222.72 & 0.956 & 3.015 & 8.0839 / 8.4094 & 44901.52 & 0.952 & 25.699 \\
CoMusion (ECCV 2024) & 0.3604 / 0.5383 & 92.62 & \textbf{0.950} & 1.242 & 8.6798 / 8.6127 & 54190.65 & 0.953 & 24.961 \\
SPARD (AAAI 2026) & 0.3677 / 0.5393 & 105.70 & 0.954 & \underline{1.078} & 5.5562 / 5.7267 & 25823.22 & 0.952 & 37.934 \\
MotionMap (CVPR 2025) & 1.4691 / 1.5391 & 470.72 & 0.953 & 4.645 & 5.3103 / 5.6501 & 21490.65 & \underline{0.951} & 18.384 \\
SLD-HMP (ECCV 2024) & 0.4801 / 0.6722 & 401.14 & \underline{0.952} & 1.347 & 2.7547 / 3.8620 & 16866.43 & \underline{0.951} & 11.966 \\
SPARC$^{\dagger}$ & \underline{0.3103 / 0.4719} & \textbf{-0.05} & \underline{0.952} & \textbf{0.940} & 2.7207 / 3.9083 & \textbf{9.84} & 0.953 & \textbf{8.489} \\
\end{supertabular}

\par\medskip

\tablefirsthead{%
\multicolumn{9}{c}{\footnotesize\textbf{Table~\ref{tab:overview_conjbayes_all} (continued)}}\\
\toprule
& \multicolumn{4}{c}{Human3.6M} & \multicolumn{4}{c}{3DPW} \\
\cmidrule(lr){2-5}\cmidrule(lr){6-9}
Model &
\shortstack{MPJPE/FDE\\$\downarrow$} & \shortstack{NLL\\$\downarrow$} & \shortstack{Cov95\\{\tiny(CP)}} & \shortstack{W95\\{\tiny(CP)}$\downarrow$} &
\shortstack{MPJPE/FDE\\$\downarrow$} & \shortstack{NLL\\$\downarrow$} & \shortstack{Cov95\\{\tiny(CP)}} & \shortstack{W95\\{\tiny(CP)}$\downarrow$} \\
\midrule
}
\tablehead{%
\multicolumn{9}{c}{\footnotesize\textbf{Table~\ref{tab:overview_conjbayes_all} (continued)}}\\
\toprule
& \multicolumn{4}{c}{Human3.6M} & \multicolumn{4}{c}{3DPW} \\
\cmidrule(lr){2-5}\cmidrule(lr){6-9}
Model &
\shortstack{MPJPE/FDE\\$\downarrow$} & \shortstack{NLL\\$\downarrow$} & \shortstack{Cov95\\{\tiny(CP)}} & \shortstack{W95\\{\tiny(CP)}$\downarrow$} &
\shortstack{MPJPE/FDE\\$\downarrow$} & \shortstack{NLL\\$\downarrow$} & \shortstack{Cov95\\{\tiny(CP)}} & \shortstack{W95\\{\tiny(CP)}$\downarrow$} \\
\midrule
}
\tabletail{%
\bottomrule
\multicolumn{9}{r}{\footnotesize Continued on next page}\\
}
\tablelasttail{\bottomrule}

\centering
\begin{supertabular}{@{}l *{8}{c}@{}}
siMLPe (base) & 0.1012 / 0.1535 & 12.84 & -- & -- & 0.0709 / 0.1021 & 5.08 & -- & -- \\
MeanFT (siMLPe fine-tune) & \textbf{0.0720} & 5.24 & -- & -- & \textbf{0.0572 / 0.0832} & 1.71 & -- & -- \\
AuxTasks (50/50) & 0.0999 & 12.63 & -- & -- & 0.0745 / 0.1029 & 5.27 & -- & -- \\
Symplectic (50/50) & 0.1070 / 0.2151 & 13.08 & -- & -- & 0.1042 / 0.2794 & 20.62 & -- & -- \\
DeFeeNet (50/50) & 0.0981 & 12.31 & -- & -- & 0.0701 / 0.0996 & 4.99 & -- & -- \\
GSPS (50/50) & 0.0960 / 0.1540 & 10.96 & 0.945 & 0.313 & 0.0741 / 0.1138 & 4.96 & \textbf{0.949} & 0.232 \\
HumanMAC (50/50) & 0.0982 & 7.83 & -- & -- & 0.0707 & 2.68 & -- & -- \\
SeSGCN (teacher) & 0.1119 / 0.1505 & 9.85 & -- & -- & 0.0952 / 0.1100 & 5.32 & -- & -- \\
Deep ensemble & 0.0971 / 0.1515 & 9.12 & 0.946 & 0.337 & 0.0690 / 0.1004 & 3.90 & \textbf{0.951} & 0.255 \\
Bridge-CQR & \underline{0.0721 / 0.1189} & \underline{1.13} & 0.995 & 1.238 & \underline{0.0580 / 0.0841} & \underline{-0.45} & 0.955 & 0.483 \\
SkeletonDiffusion (CVPR 2025) & 0.1242 / 0.1662 & 13.58 & 0.947 & 0.406 & 0.0822 / 0.0988 & 4.91 & \underline{0.948} & 0.256 \\
BeLFusion (ICCV 2023) & 0.1055 / 0.1512 & 6.22 & \underline{0.948} & 0.316 & 0.0771 / 0.1059 & 2.13 & \textbf{0.949} & 0.207 \\
TransFusion (RA-L 2024) & 0.1060 / 0.1449 & 7.18 & \underline{0.948} & 0.295 & 0.0787 / 0.0997 & 2.72 & 0.945 & \underline{0.187} \\
CoMusion (ECCV 2024) & 0.0944 / 0.1409 & 4.49 & 0.945 & 0.287 & 0.0690 / 0.1001 & 1.19 & 0.945 & \textbf{0.174} \\
SPARD (AAAI 2026) & 0.0928 / 0.1364 & 4.24 & 0.947 & \underline{0.250} & 0.0703 / 0.1035 & 1.25 & 0.946 & \textbf{0.174} \\
MotionMap (CVPR 2025) & 0.2504 / 0.2530 & 29.09 & \textbf{0.951} & 0.718 & 0.1841 / 0.1851 & 9.58 & \underline{0.948} & 0.476 \\
SLD-HMP (ECCV 2024) & 0.1170 / 0.1620 & 26.15 & 0.947 & 0.368 & 0.0846 / 0.1055 & 15.79 & \underline{0.948} & 0.235 \\
SPARC$^{\dagger}$ & \underline{0.0721 / 0.1189} & \textbf{-1.66} & 0.946 & \textbf{0.248} & \underline{0.0580 / 0.0841} & \textbf{-1.94} & 0.947 & 0.188 \\
\end{supertabular}

\par\medskip

\tablefirsthead{%
\multicolumn{9}{c}{\footnotesize\textbf{Table~\ref{tab:overview_conjbayes_all} (continued)}}\\
\toprule
& \multicolumn{4}{c}{CHICO} & \multicolumn{4}{c}{HA4M} \\
\cmidrule(lr){2-5}\cmidrule(lr){6-9}
Model &
\shortstack{MPJPE@10/25f\\(mm)$\downarrow$} & \shortstack{NLL\\$\downarrow$} & \shortstack{Cov95\\{\tiny(CP)}} & \shortstack{W95\\{\tiny(CP)}$\downarrow$} &
\shortstack{MPJPE@10/25f\\(mm)$\downarrow$} & \shortstack{NLL\\$\downarrow$} & \shortstack{Cov95\\{\tiny(CP)}} & \shortstack{W95\\{\tiny(CP)}$\downarrow$} \\
\midrule
}
\tablehead{%
\multicolumn{9}{c}{\footnotesize\textbf{Table~\ref{tab:overview_conjbayes_all} (continued)}}\\
\toprule
& \multicolumn{4}{c}{CHICO} & \multicolumn{4}{c}{HA4M} \\
\cmidrule(lr){2-5}\cmidrule(lr){6-9}
Model &
\shortstack{MPJPE@10/25f\\(mm)$\downarrow$} & \shortstack{NLL\\$\downarrow$} & \shortstack{Cov95\\{\tiny(CP)}} & \shortstack{W95\\{\tiny(CP)}$\downarrow$} &
\shortstack{MPJPE@10/25f\\(mm)$\downarrow$} & \shortstack{NLL\\$\downarrow$} & \shortstack{Cov95\\{\tiny(CP)}} & \shortstack{W95\\{\tiny(CP)}$\downarrow$} \\
\midrule
}
\tabletail{%
\bottomrule
\multicolumn{9}{r}{\footnotesize Continued on next page}\\
}
\tablelasttail{\bottomrule}

\centering
\begin{supertabular}{@{}l *{8}{c}@{}}
siMLPe (base) & 43.2 / 75.2 & 0.66 & -- & -- & \textbf{46.1 / 57.0} & \underline{-1.08} & -- & -- \\
MeanFT (siMLPe fine-tune) & \underline{42.7 / 64.8} & -0.35 & -- & -- & \underline{53.6 / 57.7} & -1.00 & -- & -- \\
AuxTasks (50/50) & 44.6 / 77.7 & 0.88 & -- & -- & 91.5 / 70.6 & 0.30 & -- & -- \\
Symplectic (50/50) & 46.6 / 91.2 & 0.77 & -- & -- & 79.5 / 364.9 & 24.42 & -- & -- \\
DeFeeNet (50/50) & 42.7 / 75.2 & 0.69 & -- & -- & 93.5 / 77.2 & 3.11 & -- & -- \\
GSPS (50/50) & 42.2 / 74.2 & 0.28 & 0.947 & 0.163 & 42.8 / 73.8 & -0.22 & 0.929 & \textbf{0.131} \\
HumanMAC (50/50) & 52.0 / 70.7 & 0.27 & -- & -- & 189.5 / 196.7 & 35.25 & -- & -- \\
SeSGCN (teacher) & 66.2 / 77.7 & 1.43 & -- & -- & 150.9 / 171.5 & 21.59 & -- & -- \\
Deep ensemble & 41.2 / 76.4 & -0.38 & 0.946 & 0.162 & 78.0 / 192.6 & 0.19 & 0.942 & 0.236 \\
Bridge-CQR & \underline{42.7 / 64.8} & \underline{-1.47} & 0.955 & 0.523 & 103.5 / 193.9 & 5.48 & 0.972 & 0.843 \\
SkeletonDiffusion (CVPR 2025) & 57.7 / 76.8 & 1.41 & 0.952 & 0.214 & 196.7 / 189.9 & 29.06 & 0.958 & 0.778 \\
BeLFusion (ICCV 2023) & 48.4 / 69.7 & -1.04 & \underline{0.949} & 0.150 & 144.4 / 141.6 & 8.99 & \underline{0.951} & 0.344 \\
TransFusion (RA-L 2024) & 48.6 / 66.4 & -0.81 & \underline{0.949} & 0.133 & 145.5 / 170.8 & 5.43 & 0.944 & 0.296 \\
CoMusion (ECCV 2024) & 44.8 / 65.7 & -1.28 & \textbf{0.950} & \underline{0.131} & 152.7 / 168.8 & 22.95 & \underline{0.951} & 0.547 \\
SPARD (AAAI 2026) & \textbf{44.5 / 64.5} & -1.39 & \underline{0.949} & \textbf{0.129} & 152.2 / 176.1 & 22.71 & \textbf{0.950} & 0.473 \\
MotionMap (CVPR 2025) & 161.2 / 165.5 & 3.15 & 0.953 & 0.435 & 173.2 / 166.5 & 50.54 & 0.944 & 0.431 \\
SLD-HMP (ECCV 2024) & 54.3 / 80.1 & 4.17 & 0.948 & 0.162 & 58.4 / 80.4 & 7.34 & 0.936 & \underline{0.155} \\
SPARC$^{\dagger}$ & \underline{42.7 / 64.8} & \textbf{-2.21} & \underline{0.949} & \underline{0.131} & 103.5 / 193.9 & \textbf{-2.28} & 0.981 & 0.258 \\
\end{supertabular}

\par\medskip

\tablefirsthead{%
\multicolumn{9}{c}{\footnotesize\textbf{Table~\ref{tab:overview_conjbayes_all} (continued)}}\\
\toprule
& \multicolumn{4}{c}{AnDy-onePerson} & \multicolumn{4}{c}{} \\
\cmidrule(lr){2-5}
Model &
\shortstack{MPJPE@10/25f\\(mm)$\downarrow$} & \shortstack{NLL\\$\downarrow$} & \shortstack{Cov95\\{\tiny(CP)}} & \shortstack{W95\\{\tiny(CP)}$\downarrow$} &
{} & {} & {} & {} \\
\midrule
}
\tablehead{%
\multicolumn{9}{c}{\footnotesize\textbf{Table~\ref{tab:overview_conjbayes_all} (continued)}}\\
\toprule
& \multicolumn{4}{c}{AnDy-onePerson} & \multicolumn{4}{c}{} \\
\cmidrule(lr){2-5}
Model &
\shortstack{MPJPE@10/25f\\(mm)$\downarrow$} & \shortstack{NLL\\$\downarrow$} & \shortstack{Cov95\\{\tiny(CP)}} & \shortstack{W95\\{\tiny(CP)}$\downarrow$} &
{} & {} & {} & {} \\
\midrule
}
\tabletail{%
\bottomrule
\multicolumn{9}{r}{\footnotesize Continued on next page}\\
}
\tablelasttail{\bottomrule}

\centering
\begin{supertabular}{@{}l *{8}{c}@{}}
siMLPe (base) & 28.0 / 70.3 & 2.00 & -- & -- & {} & {} & {} & {} \\
MeanFT (siMLPe fine-tune) & \textbf{27.4 / 55.7} & -0.27 & -- & -- & {} & {} & {} & {} \\
AuxTasks (50/50) & 32.5 / 70.7 & 2.22 & -- & -- & {} & {} & {} & {} \\
Symplectic (50/50) & 30.2 / 105.5 & 2.62 & -- & -- & {} & {} & {} & {} \\
DeFeeNet (50/50) & 26.1 / 72.6 & 1.75 & -- & -- & {} & {} & {} & {} \\
GSPS (50/50) & 27.9 / 73.4 & 1.89 & \underline{0.951} & 0.182 & {} & {} & {} & {} \\
HumanMAC (50/50) & 55.7 / 80.0 & 1.98 & -- & -- & {} & {} & {} & {} \\
SeSGCN (teacher) & 55.7 / 81.3 & 2.05 & -- & -- & {} & {} & {} & {} \\
Deep ensemble & 25.8 / 69.8 & 1.03 & 0.952 & 0.188 & {} & {} & {} & {} \\
Bridge-CQR & \textbf{27.4 / 55.7} & \underline{-1.44} & 0.965 & 0.866 & {} & {} & {} & {} \\
SkeletonDiffusion (CVPR 2025) & 63.3 / 89.7 & 4.60 & \underline{0.949} & 0.265 & {} & {} & {} & {} \\
BeLFusion (ICCV 2023) & 38.7 / 68.5 & -0.14 & \textbf{0.950} & 0.150 & {} & {} & {} & {} \\
TransFusion (RA-L 2024) & 54.7 / 82.1 & 0.07 & \underline{0.949} & 0.142 & {} & {} & {} & {} \\
CoMusion (ECCV 2024) & \underline{32.4 / 59.4} & -1.31 & 0.948 & \underline{0.110} & {} & {} & {} & {} \\
SPARD (AAAI 2026) & 32.9 / 61.0 & -1.40 & \underline{0.951} & \textbf{0.107} & {} & {} & {} & {} \\
MotionMap (CVPR 2025) & 179.3 / 190.2 & 15.52 & \underline{0.951} & 0.532 & {} & {} & {} & {} \\
SLD-HMP (ECCV 2024) & 52.0 / 76.7 & 7.91 & 0.948 & 0.193 & {} & {} & {} & {} \\
SPARC$^{\dagger}$ & \textbf{27.4 / 55.7} & \textbf{-3.53} & 0.953 & 0.144 & {} & {} & {} & {} \\
\end{supertabular}

\par\smallskip
{\footnotesize\textbf{Legend:} \textbf{bold} = best; \underline{underline} = second-best (ties included). For Cov95(CP), best/second-best are defined by closeness to the 0.95 target. \quad $^{\dagger}$ (MN-GraphJ, time-coupled $\kappa_t$, shrink).}
{\footnotesize Dataset-specific protocol-only baselines (e.g., CHICO-only SeSGCN protocol variants) are omitted to avoid redundant all-dataset rows.}
\endgroup

%% file: tables/ablation_conjbayes_fc_out_components.tex
\begin{table}[!htbp]
\centering
\caption{SPARC family component ablation matrix. Columns mark which components are active in each variant.}
\label{tab:ablation_conjbayes_fc_out_components}
\scriptsize
\setlength{\tabcolsep}{4pt}
\renewcommand{\arraystretch}{1.08}
\begin{adjustbox}{max width=\linewidth}
\begin{tabular}{@{}lcccccr@{}}
\toprule
Variant & DCT pinv & Timewise $\kappa_t$ & Hybrid $\Sigma_{\mathrm{ale}}$ & GraphJ & $\kappa_t$ coupling & $\mathrm{MR}_{\mathrm{MPJPE+NLL}}\downarrow$ \\
\midrule
SPARC family (base $\kappa(x)$) & -- & -- & -- & -- & -- & 5.28 \\
SPARC family (timewise $\kappa_t$) & -- & \checkmark & -- & -- & -- & 4.61 \\
SPARC family (DCT pinv) & \checkmark & -- & -- & -- & -- & 6.00 \\
SPARC family (hybrid MN) & -- & -- & \checkmark & -- & -- & 3.94 \\
SPARC family (hybrid MN-GraphJ) & -- & -- & \checkmark & \checkmark & -- & \textbf{2.47} \\
SPARC family (hybrid MN-GraphJ, timewise $\kappa_t$) & -- & \checkmark & \checkmark & \checkmark & -- & \underline{2.53} \\
SPARC (hybrid MN-GraphJ, time-coupled $\kappa_t$) & -- & \checkmark & \checkmark & \checkmark & \checkmark & 3.17 \\
\bottomrule
\end{tabular}
\end{adjustbox}
\par\smallskip
{\footnotesize\textbf{Legend:} \textbf{bold} = best; \underline{underline} = second-best.}
\end{table}

%% file: tables/ablation_conjbayes_fc_out_avg_rank.tex
\begin{table}[!htbp]
\centering
\caption{Mean rank (MR; $\downarrow$ better) across datasets within the SPARC ablation family. $\mathrm{MR}_{\mathrm{MPJPE+NLL}}$ is the mean of MPJPE and NLL ranks. $\mathrm{MR}_{W95(\mathrm{CP})}$ uses fixed-protocol Cov95/W95(\cp) re-evaluation across all datasets.}
\label{tab:ablation_conjbayes_fc_out_rank}
\scriptsize
\setlength{\tabcolsep}{5pt}
\renewcommand{\arraystretch}{1.10}
\begin{adjustbox}{max width=\linewidth}
\begin{tabular}{@{}lrrrr@{}}
\toprule
Variant & $\mathrm{MR}_{\mathrm{MPJPE}}\downarrow$ & $\mathrm{MR}_{\mathrm{NLL}}\downarrow$ & $\mathrm{MR}_{W95(\mathrm{CP})}\downarrow$ & $\mathrm{MR}_{\mathrm{MPJPE+NLL}}\downarrow$ \\
\midrule
SPARC family (base $\kappa(x)$) & 5.78 & 4.78 & 4.78 & 5.28 \\
SPARC family (timewise $\kappa_t$) & 5.78 & 3.44 & \underline{3.56} & 4.61 \\
SPARC family (DCT pinv) & 5.89 & 6.11 & 4.56 & 6.00 \\
SPARC family (hybrid MN) & \textbf{2.33} & 5.56 & 4.00 & 3.94 \\
SPARC family (hybrid MN-GraphJ) & \underline{2.50} & \textbf{2.44} & 4.78 & \textbf{2.47} \\
SPARC family (hybrid MN-GraphJ, timewise $\kappa_t$) & \underline{2.50} & \underline{2.56} & 3.78 & \underline{2.53} \\
SPARC (hybrid MN-GraphJ, time-coupled $\kappa_t$) & 3.22 & 3.11 & \textbf{2.56} & 3.17 \\
\bottomrule
\end{tabular}
\end{adjustbox}
\par\smallskip
{\footnotesize\textbf{Legend:} \textbf{bold} = best; \underline{underline} = second-best (ties included).}
\end{table}

%% file: tables/ablation_conjbayes_fc_out_all_datasets.tex
\begingroup
\setlength{\LTcapwidth}{\linewidth}
\setlength{\LTleft}{0pt}
\setlength{\LTright}{0pt}
\scriptsize
\setlength{\tabcolsep}{4pt}
\renewcommand{\arraystretch}{1.12}
\refstepcounter{table}
\noindent\textbf{Table \thetable:} Full FC-Out component-isolation ablation.\label{tab:ablation_conjbayes_fc_out_all}
\par\medskip
\begin{longtable}{@{}p{0.49\linewidth}rrrr@{}}
\toprule
Variant & MPJPE$\downarrow$ & NLL$\downarrow$ & Cov95(CP) & W95(CP)$\downarrow$ \\
\midrule
\endfirsthead
\multicolumn{5}{c}{\footnotesize\textbf{Table~\ref{tab:ablation_conjbayes_fc_out_all} (continued)}}\\
\toprule
Variant & MPJPE$\downarrow$ & NLL$\downarrow$ & Cov95(CP) & W95(CP)$\downarrow$ \\
\midrule
\endhead
\midrule
\multicolumn{5}{r}{\footnotesize Continued on next page}\\
\endfoot
\bottomrule
\endlastfoot
\addlinespace
\multicolumn{5}{@{}l@{}}{\textbf{AMASS}} \\
\addlinespace
SPARC family (base $\kappa(x)$) & 0.0549 & -1.84 & 0.949 & \underline{0.188} \\
SPARC family (timewise $\kappa_t$) & 0.0551 & \underline{-2.04} & \underline{0.949} & \textbf{0.185} \\
SPARC family (DCT pinv) & 0.0605 & -1.63 & \textbf{0.950} & 0.192 \\
SPARC family (hybrid MN) & \textbf{0.0542} & -1.64 & 0.951 & 0.189 \\
SPARC family (hybrid MN-GraphJ) & \textbf{0.0542} & -1.85 & 0.951 & 0.220 \\
SPARC family (hybrid MN-GraphJ, timewise $\kappa_t$) & \underline{0.0542} & -1.86 & 0.952 & 0.219 \\
SPARC (hybrid MN-GraphJ, time-coupled $\kappa_t$) & 0.0593 & \textbf{-2.37} & 0.952 & 0.212 \\
\addlinespace
\multicolumn{5}{@{}l@{}}{\textbf{LaFAN1}} \\
\addlinespace
SPARC family (base $\kappa(x)$) & 3.5196 & 2.37 & 0.959 & 8.788 \\
SPARC family (timewise $\kappa_t$) & 0.6858 & 1.02 & 0.957 & 2.357 \\
SPARC family (DCT pinv) & 3.4217 & 2.13 & 0.956 & 7.101 \\
SPARC family (hybrid MN) & \underline{0.5498} & 0.28 & 0.952 & \underline{1.600} \\
SPARC family (hybrid MN-GraphJ) & \underline{0.5498} & \underline{-0.38} & \textbf{0.951} & 2.062 \\
SPARC family (hybrid MN-GraphJ, timewise $\kappa_t$) & \underline{0.5498} & -0.29 & \underline{0.951} & 2.058 \\
SPARC (hybrid MN-GraphJ, time-coupled $\kappa_t$) & \textbf{0.5358} & \textbf{-0.84} & 0.954 & \textbf{1.568} \\
\addlinespace
\multicolumn{5}{@{}l@{}}{\textbf{LaFAN1-HF}} \\
\addlinespace
SPARC family (base $\kappa(x)$) & 0.4995 & 1.03 & 0.951 & 1.444 \\
SPARC family (timewise $\kappa_t$) & 0.5271 & 3.72 & \textbf{0.950} & 1.444 \\
SPARC family (DCT pinv) & 0.4417 & 1.10 & 0.953 & 1.326 \\
SPARC family (hybrid MN) & \underline{0.3093} & 3.58 & 0.952 & 0.978 \\
SPARC family (hybrid MN-GraphJ) & \underline{0.3093} & \textbf{-0.07} & \underline{0.951} & 0.996 \\
SPARC family (hybrid MN-GraphJ, timewise $\kappa_t$) & \textbf{0.3093} & 0.02 & 0.951 & \underline{0.975} \\
SPARC (hybrid MN-GraphJ, time-coupled $\kappa_t$) & 0.3103 & \underline{-0.05} & 0.952 & \textbf{0.940} \\
\addlinespace
\multicolumn{5}{@{}l@{}}{\textbf{CMU-MoCap}} \\
\addlinespace
SPARC family (base $\kappa(x)$) & 4.4830 & 2.40 & \textbf{0.952} & 17.165 \\
SPARC family (timewise $\kappa_t$) & 2.7509 & \textbf{2.12} & 0.953 & 9.685 \\
SPARC family (DCT pinv) & 21.8957 & 3.19 & 0.956 & 39.964 \\
SPARC family (hybrid MN) & \textbf{2.6549} & 3.67 & 0.953 & 10.252 \\
SPARC family (hybrid MN-GraphJ) & \underline{2.6836} & \underline{2.18} & \underline{0.952} & 9.932 \\
SPARC family (hybrid MN-GraphJ, timewise $\kappa_t$) & \textbf{2.6549} & 2.77 & 0.953 & \underline{9.388} \\
SPARC (hybrid MN-GraphJ, time-coupled $\kappa_t$) & 2.7207 & 9.84 & 0.953 & \textbf{8.489} \\
\addlinespace
\multicolumn{5}{@{}l@{}}{\textbf{Human3.6M}} \\
\addlinespace
SPARC family (base $\kappa(x)$) & 0.0729 & -1.42 & 0.944 & \underline{0.241} \\
SPARC family (timewise $\kappa_t$) & 0.0730 & \textbf{-1.72} & 0.944 & \textbf{0.240} \\
SPARC family (DCT pinv) & 0.0730 & -1.18 & 0.944 & 0.244 \\
SPARC family (hybrid MN) & \textbf{0.0720} & -1.29 & \underline{0.945} & 0.253 \\
SPARC family (hybrid MN-GraphJ) & \textbf{0.0720} & -1.69 & 0.945 & 0.251 \\
SPARC family (hybrid MN-GraphJ, timewise $\kappa_t$) & \textbf{0.0720} & \underline{-1.69} & 0.945 & 0.251 \\
SPARC (hybrid MN-GraphJ, time-coupled $\kappa_t$) & \underline{0.0721} & -1.66 & \textbf{0.946} & 0.248 \\
\addlinespace
\multicolumn{5}{@{}l@{}}{\textbf{3DPW}} \\
\addlinespace
SPARC family (base $\kappa(x)$) & 0.0599 & -1.86 & 0.946 & \underline{0.168} \\
SPARC family (timewise $\kappa_t$) & 0.0599 & -2.06 & 0.947 & \textbf{0.165} \\
SPARC family (DCT pinv) & 0.0595 & -1.90 & \underline{0.947} & 0.169 \\
SPARC family (hybrid MN) & \underline{0.0581} & -1.87 & 0.946 & 0.173 \\
SPARC family (hybrid MN-GraphJ) & \underline{0.0581} & \underline{-2.48} & \textbf{0.948} & 0.195 \\
SPARC family (hybrid MN-GraphJ, timewise $\kappa_t$) & \underline{0.0581} & \textbf{-2.48} & \textbf{0.948} & 0.195 \\
SPARC (hybrid MN-GraphJ, time-coupled $\kappa_t$) & \textbf{0.0580} & -1.94 & 0.947 & 0.188 \\
\addlinespace
\multicolumn{5}{@{}l@{}}{\textbf{CHICO}} \\
\addlinespace
SPARC family (base $\kappa(x)$) & 48.6 / 74.1 & \textbf{-2.31} & 0.948 & 0.140 \\
SPARC family (timewise $\kappa_t$) & 49.1 / 74.0 & \underline{-2.30} & \underline{0.948} & 0.139 \\
SPARC family (DCT pinv) & 46.4 / 69.6 & -2.07 & 0.948 & \underline{0.137} \\
SPARC family (hybrid MN) & \underline{43.7 / 65.5} & -1.68 & 0.947 & 0.154 \\
SPARC family (hybrid MN-GraphJ) & \underline{43.7 / 65.5} & -2.21 & 0.947 & 0.151 \\
SPARC family (hybrid MN-GraphJ, timewise $\kappa_t$) & 43.7 / 65.5 & -2.23 & 0.948 & 0.148 \\
SPARC (hybrid MN-GraphJ, time-coupled $\kappa_t$) & \textbf{42.7 / 64.8} & -2.21 & \textbf{0.949} & \textbf{0.131} \\
\addlinespace
\multicolumn{5}{@{}l@{}}{\textbf{HA4M}} \\
\addlinespace
SPARC family (base $\kappa(x)$) & 57.7 / 130.4 & -1.28 & \underline{0.949} & 0.482 \\
SPARC family (timewise $\kappa_t$) & 41.3 / 83.9 & -1.09 & \textbf{0.951} & 0.433 \\
SPARC family (DCT pinv) & 56.0 / 148.6 & -0.86 & 0.951 & 0.402 \\
SPARC family (hybrid MN) & \textbf{42.0 / 76.7} & -1.36 & 0.945 & \underline{0.307} \\
SPARC family (hybrid MN-GraphJ) & \textbf{42.0 / 76.7} & \textbf{-2.55} & 0.945 & 0.355 \\
SPARC family (hybrid MN-GraphJ, timewise $\kappa_t$) & \underline{42.0 / 76.7} & \underline{-2.41} & 0.946 & 0.356 \\
SPARC (hybrid MN-GraphJ, time-coupled $\kappa_t$) & 103.5 / 193.9 & -2.28 & 0.981 & \textbf{0.258} \\
\addlinespace
\multicolumn{5}{@{}l@{}}{\textbf{AnDy-onePerson}} \\
\addlinespace
SPARC family (base $\kappa(x)$) & 55.2 / 83.5 & -1.44 & 0.948 & 0.196 \\
SPARC family (timewise $\kappa_t$) & 59.3 / 129.6 & -2.17 & 0.948 & 0.179 \\
SPARC family (DCT pinv) & 48.3 / 75.7 & -0.81 & 0.948 & 0.187 \\
SPARC family (hybrid MN) & 30.7 / 58.6 & -1.65 & \textbf{0.951} & 0.136 \\
SPARC family (hybrid MN-GraphJ) & 30.7 / 58.6 & -2.69 & 0.948 & \underline{0.124} \\
SPARC family (hybrid MN-GraphJ, timewise $\kappa_t$) & \underline{30.7 / 58.6} & \underline{-2.69} & \underline{0.948} & \textbf{0.115} \\
SPARC (hybrid MN-GraphJ, time-coupled $\kappa_t$) & \textbf{27.4 / 55.7} & \textbf{-3.53} & 0.953 & 0.144 \\
\end{longtable}
\par\smallskip
{\footnotesize Standard protocol: seed=304, $n_{\mathrm{cal}}=512$, $n_{\mathrm{eval}}=1024$, $\alpha=0.05$. CP columns use fixed-protocol Cov95/W95(\cp) re-evaluation across all datasets for every reported FC-Out ablation variant.}
\endgroup

%% file: tables/conjgraph_variants_all_datasets.tex
\begingroup
\setlength{\LTcapwidth}{\linewidth}
\setlength{\LTleft}{\fill}
\setlength{\LTright}{\fill}
\scriptsize
\setlength{\tabcolsep}{1.8pt}
\renewcommand{\arraystretch}{1.08}
\begin{longtable}{@{}lrrrrrrr@{}}
\caption{SPARC conformal variants across datasets (target 95\%). CP denotes split conformal marginal tubes; Mondrian denotes $\kappa$-conditioned split conformal tubes; Set denotes the trajectory-level Mahalanobis conformal ellipsoid. $\log$Vol$/d$ is the mean log-volume per dimension (lower is tighter).}\label{tab:conjgraph_variants_all}\\
\toprule
Dataset & \shortstack{Cov95\\(CP)} & \shortstack{W95(CP)\\$\downarrow$} & \shortstack{Cov95\\(Mondrian)} & \shortstack{W95(Mondrian)\\$\downarrow$} & \shortstack{Cov95\\(Set)} & \shortstack{$\log$Vol$_{\mathrm{tube}}/d$\\$\downarrow$} & \shortstack{$\log$Vol$_{\mathrm{set}}/d$\\$\downarrow$} \\
\midrule
\endfirsthead
\caption[]{SPARC conformal variants across datasets. (continued)}\\
\toprule
Dataset & \shortstack{Cov95\\(CP)} & \shortstack{W95(CP)\\$\downarrow$} & \shortstack{Cov95\\(Mondrian)} & \shortstack{W95(Mondrian)\\$\downarrow$} & \shortstack{Cov95\\(Set)} & \shortstack{$\log$Vol$_{\mathrm{tube}}/d$\\$\downarrow$} & \shortstack{$\log$Vol$_{\mathrm{set}}/d$\\$\downarrow$} \\
\midrule
\endhead
\midrule
\multicolumn{8}{r}{\footnotesize Continued on next page}\\
\endfoot
\bottomrule
\endlastfoot
AMASS & 0.952 & 0.212 & 0.956 & 0.209 & 0.958 & -1.902 & -1.810 \\
LaFAN1 & 0.954 & 1.568 & 0.955 & 1.587 & 0.965 & 0.117 & -0.456 \\
LaFAN1-HF & 0.952 & 0.940 & 0.952 & 0.910 & 0.938 & -0.729 & 0.332 \\
CMU-MoCap & 0.953 & 8.489 & 0.955 & 8.322 & 0.953 & 1.620 & 2.452 \\
Human3.6M & 0.946 & 0.248 & 0.948 & 0.248 & 0.954 & -1.714 & -1.316 \\
3DPW & 0.947 & 0.188 & 0.949 & 0.183 & 0.951 & -2.072 & -1.536 \\
CHICO & 0.949 & 0.131 & 0.952 & 0.132 & 0.936 & -2.751 & -1.830 \\
HA4M & 0.981 & 0.258 & 0.925 & 0.231 & 0.923 & -1.905 & -2.376 \\
AnDy-onePerson & 0.953 & 0.144 & 0.951 & 0.110 & 0.971 & -2.629 & -2.781 \\
\end{longtable}
\endgroup

%% file: tables/conjbayes_kappa_selective_h36m.tex
% Auto-generated by scripts/paper/make_kappa_selective_table.py
\begingroup
\setlength{\LTcapwidth}{\linewidth}
\setlength{\LTleft}{\fill}
\setlength{\LTright}{\fill}
\scriptsize
\setlength{\tabcolsep}{4.0pt}
\renewcommand{\arraystretch}{1.10}
\begin{longtable}{@{}lrrrr@{}}
\caption{Selective prediction using the analytic epistemic scale $\bar{\kappa}(x)$ on Human3.6M ($H{=}25$; seed=304, $n_{\mathrm{eval}}{=}1024$). We sort evaluation sequences by mean $\sqrt{\kappa_t(x)}$ and report MPJPE on subsets induced by abstention/fallback thresholds (keep lowest-$\kappa$ vs.\ worst highest-$\kappa$). This is a post-hoc analysis (point predictions are unchanged) and is not used in the main ranking tables; it provides no formal guarantee, but illustrates how $\kappa$ can prioritize difficult windows. Pearson correlation between mean $\sqrt{\kappa_t(x)}$ and MPJPE is $r{=}0.52$.}\label{tab:kappa_selective}\\
\toprule
Subset & Fraction & \#Seq. & \shortstack{MPJPE\\$\downarrow$} & Rel. \\
\midrule
\endfirsthead
\caption[]{Selective prediction using $\bar{\kappa}(x)$. (continued)}\\
\toprule
Subset & Fraction & \#Seq. & \shortstack{MPJPE\\$\downarrow$} & Rel. \\
\midrule
\endhead
\midrule
\multicolumn{5}{r}{\footnotesize Continued on next page}\\
\endfoot
\bottomrule
\endlastfoot
All (baseline) & 100\% & 1024 & 0.0721 & 1.00 \\
\midrule
Keep lowest-$\kappa$ & 90\% & 922 & 0.0658 & 0.91 \\
Keep lowest-$\kappa$ & 80\% & 820 & 0.0622 & 0.86 \\
Keep lowest-$\kappa$ & 70\% & 717 & 0.0605 & 0.84 \\
Keep lowest-$\kappa$ & 60\% & 615 & 0.0596 & 0.83 \\
Keep lowest-$\kappa$ & 50\% & 512 & 0.0580 & 0.80 \\
\midrule
Worst highest-$\kappa$ & 20\% & 205 & 0.1118 & 1.55 \\
Worst highest-$\kappa$ & 10\% & 103 & 0.1292 & 1.79 \\
\end{longtable}
\endgroup

%% file: tables/h36m_conjgraph_kappa_mondrian.tex
\begingroup
\setlength{\LTcapwidth}{\linewidth}
\setlength{\LTleft}{\fill}
\setlength{\LTright}{\fill}
\scriptsize
\setlength{\tabcolsep}{2.4pt}
\renewcommand{\arraystretch}{1.08}
\begin{longtable}{@{}lrrrrrr@{}}
\caption{Human3.6M: $\kappa$-conditioned (Mondrian) split conformal for marginal tubes. Protocol: calibrate on held-out windows (n=512), evaluate on disjoint test windows (n=1024), target $\alpha=0.05$, and form 3 calibration-set quantile bins by mean $\sqrt{\kappa_t(x)}$. The table compares unconditioned CP with Mondrian bin-wise calibration using marginal coverage (Cov95) and mean interval width (W95).}\label{tab:h36m_kappa_mondrian}\\
\toprule
$\sqrt{\kappa}$ bin & \#Seq. & Cov95(CP) & W95(CP)$\downarrow$ & Cov95(Mondrian) & W95(Mondrian)$\downarrow$ \\
\midrule
\endfirsthead
\caption[]{Human3.6M: $\kappa$-conditioned (Mondrian) split conformal for marginal tubes. (continued)}\\
\toprule
$\sqrt{\kappa}$ bin & \#Seq. & Cov95(CP) & W95(CP)$\downarrow$ & Cov95(Mondrian) & W95(Mondrian)$\downarrow$ \\
\midrule
\endhead
\midrule
\multicolumn{6}{r}{\footnotesize Continued on next page}\\
\endfoot
\bottomrule
\endlastfoot
0--33\% [1.000175,1.001171) & 290 & 0.971 & 0.248 & 0.944 & 0.187 \\
33--67\% [1.001171,1.003041) & 370 & 0.958 & 0.248 & 0.950 & 0.231 \\
67--100\% [1.003041,1.012778) & 364 & 0.912 & 0.249 & 0.950 & 0.315 \\
\midrule
All & 1024 & 0.946 & 0.248 & 0.948 & 0.248 \\
\end{longtable}
\endgroup

%% file: tables/h36m_conjgraph_traj_set.tex
\begingroup
\setlength{\LTcapwidth}{\linewidth}
\setlength{\LTleft}{\fill}
\setlength{\LTright}{\fill}
\scriptsize
\setlength{\tabcolsep}{3.6pt}
\renewcommand{\arraystretch}{1.08}
\begin{longtable}{@{}lrr@{}}
\caption{Human3.6M: trajectory-level conformal ellipsoid from the structured hybrid covariance. Protocol: $n_{\mathrm{cal}}=512$, $n_{\mathrm{eval}}=1024$, $\alpha=0.05$. The table compares the axis-aligned CP rectangle with the conformal ellipsoid induced by the structured Mahalanobis/whitened-residual score under $\Sigma_T^{\mathrm{hyb}}\otimes\Sigma_C$; $\log$Vol$/d$ is mean log-volume per dimension (lower is tighter).}\label{tab:h36m_traj_set}\\
\toprule
Set & Coverage & \shortstack{$\log$Vol$/d$\\$\downarrow$} \\
\midrule
\endfirsthead
\caption[]{Human3.6M: trajectory-level conformal ellipsoid from the structured hybrid covariance. (continued)}\\
\toprule
Set & Coverage & \shortstack{$\log$Vol$/d$\\$\downarrow$} \\
\midrule
\endhead
\midrule
\multicolumn{3}{r}{\footnotesize Continued on next page}\\
\endfoot
\bottomrule
\endlastfoot
Marginal tube (CP) & 0.946 & -1.714 \\
Trajectory ellipsoid (Mahalanobis CP) & 0.954 & -1.316 \\
\end{longtable}
\endgroup

%% file: tables/synthetic_conjgraph_tuning_summary.tex
% Auto-generated by scripts/paper/make_synthetic_tuning_summary_table.py
\begingroup
\setlength{\LTcapwidth}{\linewidth}
\setlength{\LTleft}{\fill}
\setlength{\LTright}{\fill}
\scriptsize
\setlength{\tabcolsep}{2.0pt}
\begin{longtable}{@{}p{0.22\linewidth}ccccccc@{}}
\caption{Before/after synthetic tuning summary for SPARC under the fixed protocol split ($n_{\mathrm{cal}}{=}512$, $n_{\mathrm{eval}}{=}1024$, $\alpha{=}0.05$). ``Scale'' denotes the fixed multiplier applied to the analytic Bayesian quadratic form in $\kappa_t$. Cov95(CP) uses per-joint split conformal quantiles $q_{t,j}$ (pooled over the 3 coordinates of each joint).}\label{tab:synth_tuning_summary}\\
\toprule
Setting & \shortstack{$r_{\kappa}$\\ID $\uparrow$} & \shortstack{$r_{\kappa}$\\shift $\uparrow$} & \shortstack{Temp\\MAE $\downarrow$} & \shortstack{Joint\\MAE $\downarrow$} & \shortstack{Cov95(CP)\\ID} & \shortstack{Cov95(CP)\\shift} & \shortstack{NLL\\$\downarrow$} \\
\midrule
\endfirsthead
\caption[]{Before/after synthetic tuning summary for SPARC. (continued)}\\
\toprule
Setting & \shortstack{$r_{\kappa}$\\ID $\uparrow$} & \shortstack{$r_{\kappa}$\\shift $\uparrow$} & \shortstack{Temp\\MAE $\downarrow$} & \shortstack{Joint\\MAE $\downarrow$} & \shortstack{Cov95(CP)\\ID} & \shortstack{Cov95(CP)\\shift} & \shortstack{NLL\\$\downarrow$} \\
\midrule
\endhead
\midrule
\multicolumn{8}{r}{\footnotesize Continued on next page}\\
\endfoot
\bottomrule
\endlastfoot
Legacy (no feat-std, scale 1) & 0.475 & 0.820 & \textbf{0.022} & 0.103 & 0.950 & 0.667 & -2.638 \\
Tuned (feat-std, scale 200) & \textbf{0.499} & \textbf{0.848} & 0.022 & \textbf{0.100} & 0.949 & 0.847 & \textbf{-2.645} \\
\end{longtable}
\endgroup

%% file: tables/synthetic_conjgraph_kappa_mondrian.tex
\begingroup
\setlength{\LTcapwidth}{\linewidth}
\setlength{\LTleft}{\fill}
\setlength{\LTright}{\fill}
\scriptsize
\setlength{\tabcolsep}{2.8pt}
\renewcommand{\arraystretch}{1.08}
\begin{longtable}{@{}lrrrrrr@{}}
\caption{Synthetic ConjGraph benchmark: $\kappa$-conditioned (Mondrian) split conformal for marginal tubes. Protocol: split the saved ID evaluation set (n=1024) into calibration/test halves (512/512) and form 5 calibration-set quantile bins by mean $\sqrt{\kappa_t(x)}$ with target $\alpha=0.05$. The table compares unconditioned CP with Mondrian bin-wise calibration on ID and shifted evaluations using marginal coverage (Cov95) and mean interval width (W95).}\label{tab:synth_kappa_mondrian}\\
\toprule
$\sqrt{\kappa}$ bin & \#Seq. & \shortstack{Cov95\\(CP)} & \shortstack{W95(CP)\\$\downarrow$} & \shortstack{Cov95\\(Mondrian)} & \shortstack{W95(Mondrian)\\$\downarrow$} \\
\midrule
\endfirsthead
\caption[]{Synthetic ConjGraph benchmark: $\kappa$-conditioned (Mondrian) split conformal for marginal tubes. (continued)}\\
\toprule
$\sqrt{\kappa}$ bin & \#Seq. & \shortstack{Cov95\\(CP)} & \shortstack{W95(CP)\\$\downarrow$} & \shortstack{Cov95\\(Mondrian)} & \shortstack{W95(Mondrian)\\$\downarrow$} \\
\midrule
\endhead
\midrule
\multicolumn{6}{r}{\footnotesize Continued on next page}\\
\endfoot
\bottomrule
\endlastfoot
\multicolumn{6}{@{}l@{}}{\textbf{In-distribution (ID) test}} \\
0--20\% [1.03,1.05) & 107 & 0.959 & 0.146 & 0.958 & 0.146 \\
20--40\% [1.05,1.07) & 133 & 0.954 & 0.149 & 0.957 & 0.151 \\
40--60\% [1.07,1.09) & 93 & 0.954 & 0.152 & 0.961 & 0.158 \\
60--80\% [1.09,1.13) & 87 & 0.946 & 0.156 & 0.952 & 0.162 \\
80--100\% [1.13,1.65) & 92 & 0.953 & 0.175 & 0.954 & 0.176 \\
\midrule
All & 512 & 0.954 & 0.155 & 0.957 & 0.158 \\
\midrule
\multicolumn{6}{@{}l@{}}{\textbf{Shifted evaluation}} \\
0--20\% [1.03,1.05) & 2 & 0.938 & 0.147 & 0.932 & 0.147 \\
20--40\% [1.05,1.07) & 9 & 0.962 & 0.149 & 0.966 & 0.151 \\
40--60\% [1.07,1.09) & 18 & 0.963 & 0.152 & 0.970 & 0.158 \\
60--80\% [1.09,1.13) & 39 & 0.948 & 0.156 & 0.955 & 0.162 \\
80--100\% [1.13,1.65) & 956 & 0.844 & 0.281 & 0.846 & 0.283 \\
\midrule
All & 1024 & 0.851 & 0.272 & 0.853 & 0.275 \\
\end{longtable}
\endgroup

%% file: tables/synthetic_conjgraph_traj_set.tex
\begingroup
\setlength{\LTcapwidth}{\linewidth}
\setlength{\LTleft}{\fill}
\setlength{\LTright}{\fill}
\scriptsize
\setlength{\tabcolsep}{2.8pt}
\renewcommand{\arraystretch}{1.08}
\begin{longtable}{@{}lrrrr@{}}
\caption{Synthetic ConjGraph benchmark: trajectory-level conformal set from the structured hybrid covariance. Protocol: saved ID evaluation split halved into calibration/test subsets (512/512), $\alpha=0.05$; shifted results ($n=1024$) are reference-only because exchangeability does not apply. The table compares the axis-aligned CP rectangle with the ellipsoid from the whitened Frobenius/Mahalanobis score $\| (L_T^{\mathrm{hyb}})^{-1}(y-\mu)L_C^{-\top}\|_F$ under $\Sigma_T^{\mathrm{hyb}}\otimes\Sigma_C$; $\log$Vol$/d$ is mean log-volume per dimension (lower is tighter).}\label{tab:synth_traj_set}\\
\toprule
Set & \shortstack{Cov95\\ID} & \shortstack{Cov95\\shift} & \shortstack{$\log$Vol$/d$\\ID $\downarrow$} & \shortstack{$\log$Vol$/d$\\shift $\downarrow$} \\
\midrule
\endfirsthead
\caption[]{Synthetic ConjGraph benchmark: trajectory-level conformal set from the structured hybrid covariance. (continued)}\\
\toprule
Set & \shortstack{Cov95\\ID} & \shortstack{Cov95\\shift} & \shortstack{$\log$Vol$/d$\\ID $\downarrow$} & \shortstack{$\log$Vol$/d$\\shift $\downarrow$} \\
\midrule
\endhead
\midrule
\multicolumn{5}{r}{\footnotesize Continued on next page}\\
\endfoot
\bottomrule
\endlastfoot
Marginal tube (CP) & 0.954 & 0.851 & -1.870 & -1.357 \\
Trajectory ellipsoid (Mahalanobis CP) & 0.967 & 0.275 & -2.546 & -2.032 \\
\end{longtable}
\endgroup

%% file: tables/runtime_structured.tex
% Auto-generated by scripts/paper/bench_runtime_structured.py
\setlength{\LTcapwidth}{\linewidth}
\setlength{\LTleft}{\fill}
\setlength{\LTright}{\fill}
\scriptsize
\setlength{\tabcolsep}{4.0pt}
\renewcommand{\arraystretch}{1.10}
\begin{longtable}{@{}lrrr@{}}
\caption{Single-sample inference latency for structured uncertainty heads (Human3.6M, $H{=}25$, batch size 1).}\label{tab:runtime_structured}\\
\toprule
Model & \shortstack{CPU\\(ms)} & \shortstack{GPU\\(ms)} & \shortstack{GPU peak\\(MB)} \\
\midrule
\endfirsthead
\caption[]{Single-sample inference latency for structured uncertainty heads. (continued)}\\
\toprule
Model & \shortstack{CPU\\(ms)} & \shortstack{GPU\\(ms)} & \shortstack{GPU peak\\(MB)} \\
\midrule
\endhead
\midrule
\multicolumn{4}{r}{\footnotesize Continued on next page}\\
\endfoot
\bottomrule
\endlastfoot
siMLPe (MeanFT) & 1.21 & 2.36 & 9.8 \\
MatrixNormal & 1.42 & 2.83 & 9.9 \\
MatrixNormalGraph & 1.44 & 2.91 & 9.9 \\
GMRF & 1.31 & 2.67 & 9.8 \\
\end{longtable}

%% file: tables/appendix_compute_cost.tex
% Auto-generated by scripts/paper/bench_appendix_compute_cost.py
\setlength{\LTcapwidth}{\linewidth}
\setlength{\LTleft}{\fill}
\setlength{\LTright}{\fill}
\scriptsize
\setlength{\tabcolsep}{2.5pt}
\renewcommand{\arraystretch}{1.10}
\begin{longtable}{@{}p{0.34\linewidth}rrrrrr@{}}
\caption{Compute and storage overhead (Human3.6M, $H{=}25$, batch size 1). We report the number of samples $K$ used to form predictive mean/std (stochastic baselines), total parameter count, additional stored non-parameter state (e.g., SPARC: $\Lambda^{-1}$ statistics), and single-sample inference latency. External baselines: SkeletonDiffusion~\citep{curreli2025nonisotropic}, BeLFusion~\citep{barquero2023belfusion}, TransFusion~\citep{tian2024transfusion}, CoMusion~\citep{sun2024comusion}, SPARD~\citep{zhang2026spard}, MotionMap~\citep{hosseininejad2025motionmap}, SLD-HMP~\citep{xu2024learning}.}\label{tab:appendix_compute_cost}\\
\toprule
Model & $K$ & \shortstack{Params\\(M)} & \shortstack{Extra state\\(MB)} & \shortstack{CPU\\(ms)} & \shortstack{GPU\\(ms)} & \shortstack{GPU peak\\(MB)} \\
\midrule
\endfirsthead
\caption[]{Compute and storage overhead (continued)}\\
\toprule
Model & $K$ & \shortstack{Params\\(M)} & \shortstack{Extra state\\(MB)} & \shortstack{CPU\\(ms)} & \shortstack{GPU\\(ms)} & \shortstack{GPU peak\\(MB)} \\
\midrule
\endhead
\midrule
\multicolumn{7}{r}{\footnotesize Continued on next page}\\
\endfoot
\bottomrule
\endlastfoot
siMLPe (base) & 1 & 0.14 & 0.00 & 1.23 & 2.38 & 9.8 \\
MeanFT (siMLPe fine-tune) & 1 & 0.14 & 0.00 & 1.22 & 2.39 & 9.8 \\
AuxTasks (50/50) & 1 & 0.14 & 0.00 & 1.23 & 2.39 & 9.8 \\
Symplectic (50/50) & 1 & 0.14 & 0.00 & 1.34 & 2.63 & 9.8 \\
DeFeeNet (50/50) & 1 & 0.16 & 0.00 & 1.46 & 2.79 & 9.9 \\
GSPS (50/50) & 7 & 0.16 & 0.00 & 1.32 & 2.49 & 10.0 \\
HumanMAC (50/50) & 1 & 0.29 & 0.00 & 1.52 & 2.36 & 10.4 \\
SeSGCN (teacher) & 1 & 0.44 & 0.00 & 2.45 & 0.64 & 11.0 \\
Deep ensemble & 5 & 0.69 & 0.00 & 6.41 & 12.08 & 12.1 \\
Bridge-CQR & 1 & 0.15 & 0.00 & 1.29 & 2.48 & 12.7 \\
MatrixNormal & 1 & 0.17 & 0.00 & 1.43 & 2.86 & 12.8 \\
MatrixNormalGraph & 1 & 0.16 & 0.00 & 1.45 & 2.92 & 12.7 \\
GMRF & 1 & 0.14 & 0.00 & 1.32 & 2.68 & 12.7 \\
SPARC & 1 & 0.16 & 0.45 & 2.81 & 5.49 & 13.2 \\
SkeletonDiffusion (CVPR 2025) & 20 & 0.31 & 0.00 & -- & 19.13 & 14.5 \\
BeLFusion (ICCV 2023) & 20 & 0.71 & 0.00 & -- & 0.95 & 15.5 \\
TransFusion (RA-L 2024) & 20 & 15.73 & 0.00 & -- & 36.59 & 80.9 \\
CoMusion (ECCV 2024) & 20 & 16.16 & 0.00 & -- & 16.05 & 77.9 \\
SPARD (AAAI 2026) & 20 & 32.03 & 0.00 & -- & 35.52 & 145.3 \\
MotionMap (CVPR 2025) & 20 & 2.23 & 0.50 & -- & 3.72 & 22.3 \\
SLD-HMP (ECCV 2024) & 20 & 0.03 & 0.00 & -- & 6.85 & 11.9 \\
\end{longtable}